\newtheorem{theorem}{Theorem}
\newtheorem{definition}{Definition}
\newcommand*{\dif}{\mathop{}\!\mathrm{d}}
\begin{document}

\title{Low-Rank Tensor Recovery via Variational Schatten-$p$ Quasi-Norm and Jacobian Regularization}

\author{
	Zhengyun Cheng\orcidlink{0000-0002-1003-4381}, 
	Ruizhe Zhang\orcidlink{0009-0008-3040-9583}, 
	Guanwen Zhang\orcidlink{0000-0001-6036-4074}, 
	Yi Xu\orcidlink{0009-0000-9900-6143}, 
	Xiangyang Ji\orcidlink{0000-0001-9542-5260},
	and Wei Zhou\orcidlink{0000-0001-9715-6957}
	\thanks{
		This work was supported by the Innovation Foundation for Doctor Dissertation of Northwestern Polytechnical University (CX2025074).
		\emph{(Corresponding author: Guanwen Zhang.)}}
	\thanks{
		Zhengyun Cheng, Ruizhe Zhang, Guanwen Zhang and Wei Zhou are with the School of Electronics and Information, Northwestern Polytechnical University, Xi'an 710129, China. (e-mail: chengzy@mail.nwpu.edu.cn; zhangruizhe@mail.nwpu.edu.cn; guanwen.zh@nwpu.edu.cn; zhouwei@nwpu.edu.cn)}
	\thanks{
		Yi Xu is with the School of Control Science and Engineering, Dalian University of Technology, Dalian 116081, China. (e-mail: yxu@dlut.edu.cn)}
	\thanks{
		Xiangyang Ji is with Tsinghua University, Beijing 100190, China. (e-mail: xyji@tsinghua.edu.cn)}
}
\maketitle

\begin{abstract}
Higher-order tensors are well-suited for representing multi-dimensional data, such as images and videos, which typically characterize low-rank structures.
Low-rank tensor decomposition has become essential in machine learning and computer vision, but existing methods like Tucker decomposition offer flexibility at the expense of interpretability.
The CANDECOMP/PARAFAC (CP) decomposition provides a natural and interpretable structure, while obtaining a sparse solutions remains challenging. 
Leveraging the rich properties of CP decomposition, we propose a CP-based low-rank tensor function parameterized by neural networks (NN) for implicit neural representation. This approach can model the tensor both on-grid and beyond grid, fully utilizing the non-linearity of NN with theoretical guarantees on excess risk bounds.
To achieve sparser CP decomposition, we introduce a variational Schatten-p quasi-norm to prune redundant rank-1 components and prove that it serves as a common upper bound for the Schatten-p quasi-norms of arbitrary unfolding matrices. 
For smoothness, we propose a regularization term based on the spectral norm of the Jacobian and Hutchinson's trace estimator. The proposed smoothness regularization is SVD-free and avoids explicit chain rule derivations. It can serve as an alternative to Total Variation (TV) regularization in image denoising tasks and is naturally applicable to implicit neural representation.
Extensive experiments on multi-dimensional data recovery tasks, including image inpainting, denoising, and point cloud upsampling, demonstrate the superiority and versatility of our method compared to state-of-the-art approaches. The code is available at \url{https://github.com/CZY-Code/CP-Pruner}.
\end{abstract}

\begin{IEEEkeywords}
Low-rank tensor recovery, Variational Schatten-p quasi-norm, Jacobian-based smoothness.
\end{IEEEkeywords}

\section{Introduction}
Recent technological advancements have led to an increase in multi-dimensional data types. High-dimensional data often concentrates near a non-linear low-dimensional manifold \cite{lei2020geometric}, suggesting that it can be effectively represented in a lower-dimensional space. 
Leveraging low-rank prior has proven crucial in various tasks, including image and video recovery \cite{liu2025gradient,liu2025hyperspectral,li2024adaptive}, point cloud completion \cite{luo2024revisiting, luo2023low}, and 3D reconstruction \cite{chen2022tensorf, gao2023strivec, zhang2024satensorf}. Higher-order tensors provide a natural framework for modeling and processing these multi-dimensional datasets. By exploiting tensor low-rank properties, we can efficiently process high-dimensional data, enhancing algorithmic efficiency across various domains.

The low-rankness of tensors has been extensively studied for data processing and representation \cite{chen2025full, lu2019tensor, giampouras2020novel, fan2021multi, fan2023euclideannorminduced}. For higher-order tensors, the rank definition is not unique and depends on the decomposition method used. According to the low-rank hypothesis, higher-order tensors can be decomposed into combinations of lower-order or lower-dimensional subtensors through various techniques. These methods include CP decomposition \cite{kolda2009tensor, ashraphijuo2017fundamental, fan2023euclideannorminduced}, Tucker decomposition \cite{sidiropoulos2017tensor}, tensor ring \cite{zhao2016tensor, he2022patch}, tensor train \cite{liu2020low, zhang2022effective}, tensor SVD \cite{lu2019tensor, qin2022low}, and tensor network structure search \cite{li2022permutation, zheng2024svdinstn}.
The most classical tensor ranks are the multilinear rank, defined by the ranks of unfolding matrices on each mode. The CP rank is defined as the minimum number of components decomposed into a rank-1 tensor. Solving low-multilinear/CP-rank recovery problems has proven effective for obtaining sparse representations of multi-dimensional data \cite{sidiropoulos2017tensor, luo2022hlrtf, luo2023low, fan2023euclideannorminduced}. 
In practice, the Schatten-p quasi-norm is a popular proxy for low-rank recovery, enabling the solutions with sparse singular values \cite{zhang2023efficient, fan2019factor, giampouras2020novel, fan2023euclideannorminduced}, and the nuclear norm is the unique Schatten-p quasi-norm that is a convex function. 
The variational forms of Schatten-p quasi-norm has been proposed through minimize the Frobenius norm of component vectors to obtain solution of low-CP-rank decomposition  \cite{fan2023euclideannorminduced}. However, the tensor-based Schatten-p quasi-norm minimization is hard in both theory and practice, limiting their application to low-rank tensor recovery \cite{friedland2018nuclear}, such as multi-noise mixed conditions or beyond-grid tensor approximations. 
To address the above issues, we draw inspiration from \cite{luo2023low, li2025deep} to construct a CP-based implicit neural network for tensor approximation. We then optimize this neural network via the Adam optimizer under variational Schatten-p quasi-norm constraints, aiming to control the sparsity of CP decomposition.

The Local smoothness is another vitally essential prior for many real-world multi-dimensional data. The TV regularization has gained prominence in tasks like image denoising and 3D reconstruction \cite{rudin1992nonlinear, li2024local, wang2023guaranteed, peng2022exact, chen2022tensorf}, emphasizing the importance of smoothness priors. However, traditional TV operators is grid dependency thus are not suitable for beyond-grid tensor, limiting their applicability on continuous data.
To address this issue, spectral norm regularization is applied layer-wise to neural networks, enforcing Lipschitz smoothness \cite{yoshida2017spectral, miyato2018spectral}. It guarantees that small input perturbations induce bounded output variations, rendering it well-suited for continuous data. However, while the spectral norm of the model's Jacobian quantifies first-order continuity, it incurs substantial computational costs due to intensive singular value decomposition (SVD) computations and the need for explicit chain rule derivations \cite{scarvelis2024nuclear}.
Despite the effectiveness of TV regularization in many applications, its limitations with continuous data highlight the need for alternatives like spectral norm regularization. To circumvent the high computational demands of the Jacobian's spectral norm, we propose a smooth regularizer via Hutchinson's trace estimator, eliminating the need for SVD calculations and chain rule derivations.


\begin{figure}[t]
	\begin{center}
		\includegraphics[width=0.95\linewidth]{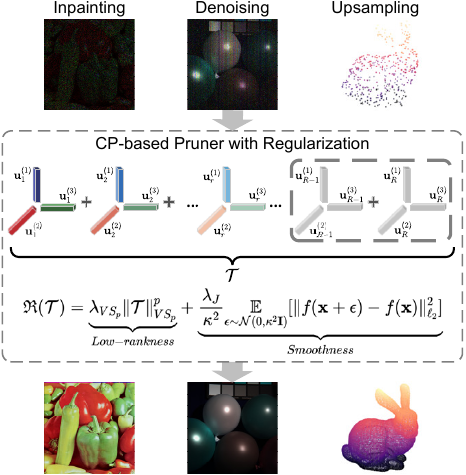}
	\end{center}
	\caption{The proposed CP-Pruner represents tensor data on or beyond meshgrids. The low-rank regularization automatically prune redundant components for a sparser CP representation. Smoothness regularization is meshgrid-independent and avoids extra TV loss for denoising task.}
	\label{fig:brief}
\end{figure}
As illustrated in Fig.~\ref{fig:brief}, our method yields the following contributions:
\begin{itemize}
	\item We propose a CP-based low-rank tensor function for implicitly representing multi-dimensional data, with theoretical guarantees on the excess risk bound.
	\item We theoretically establish that the matrix-based Schatten-$p$ quasi-norm of arbitrary tensor unfolding is upper bounded by the corresponding variational tensor-based Schatten-$p$ quasi-norm.
	\item We introduce a smooth regularizer based on the spectral norm of the Jacobian, efficiently estimated via Hutchinson's trace estimator, which eliminates the need for SVD computations and chain-rule-based derivatives.
	\item Our method applies to diverse on and beyond-grid multi-dimensional recovery tasks, including image inpainting, denoising, and point cloud upsampling. Extensive experiments show its broad applicability and superior performance over state-of-the-art methods.
\end{itemize}

\section{RELATED WORK}
\subsection{Schatten-p Quasi-Norm in Low-rank Recovery}
In most cases, the solution to matrix or tensor low-rank decomposition is not unique \cite{liu2012robust, bhaskara2014uniqueness}. To better identify the underlying low-dimensional subspace, recent studies have proposed nonconvex heuristics such as the Schatten-$p$ quasi-norm as explicit regularization\cite{zhang2023efficient}. These methods have been shown to significantly outperform their convex counterparts, i.e., $\ell_1$ and nuclear norms, in sparse/low-rank vector/matrix recovery problems across a wide range of applications \cite{shang2016tractable, shang2020unified, giampouras2020novel, fan2023euclideannorminduced}.

For instance, \cite{shang2016tractable} introduced variational definitions for $S_{1/2}$, which can be converted into the mean of the nuclear norms of two factor matrices. The generalized variational forms of the $S_p$ quasi-norm for any $p \in (0, 1)$ were derived by \cite{shang2020unified}, though these still require computing SVDs on the factor matrices, posing challenges in large-scale problems. To address this issue, \cite{fan2019factor} proposed two SVD-free variational definitions of the $S_p$ quasi-norm based on the columns of the factor matrices. In addition, \cite{fan2023euclideannorminduced} applied these variational forms naturally to CP decomposition, providing a sharper rank proxy for low-rank tensor recovery compared to the nuclear norm. 
Building on these advantages, we further introduce the variational Schatten-$p$ quasi-norm of tensors into deep learning framework, aiming to achieve a sparser CP-based neural representation and to automatically prune redundant components.

\subsection{Smoothness Regularization in Low-rank Representation}
Common methods for modeling data continuity include TV regularization and continuous basis functions. TV loss encourages uniform regions and is widely used for denoising. It has two variants: anisotropic TV, which uses the absolute distance of neighboring differences, and isotropic TV, which uses the square distance \cite{rudin1992nonlinear, cai2022anisotropic}. Its variations are tailored to data types: spatial TV for images to characterize piecewise smoothness \cite{li2024local}, spectral-spatial TV for hyperspectral images to capture spectral smoothness \cite{peng2022hyperspectral}, and temporal-spatial TV for videos \cite{yin2024spatial}.

Many studies integrate smoothness into low-rank representations by leveraging the approximate full-rank property of difference operators, ensuring gradient tensors inherit both low-rank and smoothness traits of original data \cite{wang2023guaranteed, peng2022exact}. For 3D scenes, TensoRF uses trilinear interpolation for continuous fields and TV loss to handle outliers in sparse regions \cite{chen2022tensorf}.
Alternative approaches extend CP decomposition to continuous multivariate functions via weighted smooth basis functions, e.g., Gaussians \cite{yokota2015smooth, imaizumi2017tensor}. While these embed smoothness implicitly and ensure differentiability, they are unsuitable for complex denoising tasks. 

A key limitation of TV-based methods is their difficulty in continuous data representation—NeurTV addresses this by proposing TV regularization on the neural domain, requiring explicit chain differentiation \cite{luo2025neurtv}.
In contrast, the spectral norm is well-suited for continuous data and enhances robustness to input perturbations \cite{yoshida2017spectral, miyato2018spectral}. Building on this, we propose a Jacobian spectral norm-based regularization. It avoids SVD computations and Jacobian matrix storage, achieving computational efficiency while effectively solving the challenge of applying TV loss to continuous data.

\subsection{Tensor-based Implicit Neural Representations}
Classical Implicit Neural Representations (INRs) construct differentiable functions, such as deep neural networks, to implicitly represent continuous data with respect to coordinates \cite{sitzmann2020implicit}. The implicit regularization inherent in neural networks enables the capture of nonlinear signal structures, such as manifolds, which are beyond the reach of classical linear methods like singular value decomposition and principal component analysis. Despite their significant success, INRs face challenges due to the relatively high computational cost, primarily because of the large size of the input coordinate matrix \cite{chen2022tensorf, gao2023strivec}.

To mitigate this computational burden, many studies have applied high-order tensors to represent 3D scenes \cite{chen2022tensorf, gao2023strivec, luo2023low, jin2023tensoir}. By introducing low-rank properties and low-rank decompositions, these methods transform the problem of solving large-size high-order tensors into solving a set of smaller, low-order subtensors. This approach not only reduces computational complexity but also leverages the inherent low-rank structure of the data for more effective processing.
In the field of images and video, tensor-based neural representations have also garnered significant attention \cite{yang2021implicit, luo2023low, saragadam2024deeptensor}, demonstrating promising performance in tasks such as image recovery, denoising, and super-resolution. Current tensor-based INRs primarily rely on Tucker \cite{luo2023low, fang2022bayesian, fang2024functional} or VM decompositions \cite{chen2022tensorf, gao2023strivec, jin2023tensoir}, which offer flexible structures but pose challenges in terms of interpretability.

\section{THE PROPOSED METHOD}
\begin{figure*}[t]
	\begin{center}
		\includegraphics[width=0.98\linewidth]{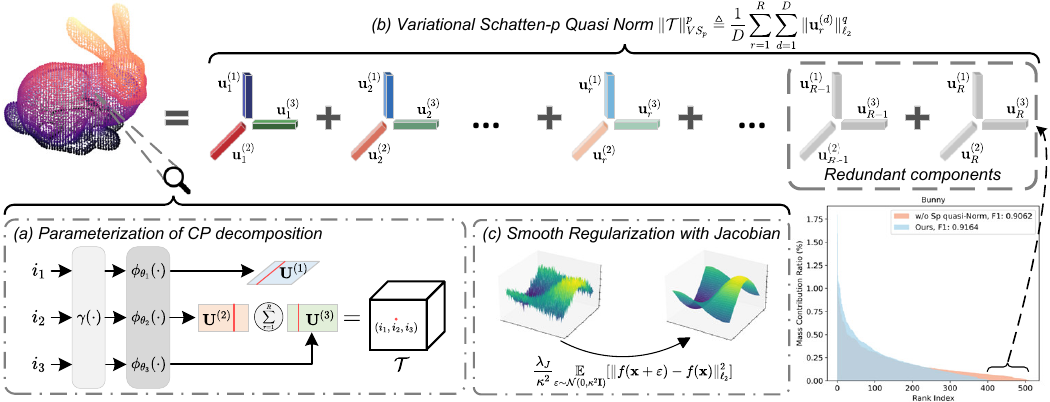}
	\end{center}
	\caption{The overview of the proposed CP-Pruner. For simplicity, we focus on the three-dimensional case, although our approach can be readily generalized to higher dimensions. (a) For holistic modeling, the variational Schatten-p quasi-norm automatically prune redundant rank-1 components for \textbf{low-rankness}, yielding a sparser CP decomposition. For detailed modeling, (b) \textbf{implicit neural networks} map spatial coordinates to sub-vectors, where Einstein summation is used to compute the tensor entry at each location; (c) The regularization based on the spectral norm of the Jacobian matrix ensures spatial \textbf{smoothness}.}
	\label{fig:overview}
\end{figure*}
\subsection{Preliminaries}
\begin{table}[t]
	\centering
	\caption{Basic matrix/tensor notation and symbols.}
	\begin{tabular}{cc}
		\toprule
		Notations &Explanations \\
		\midrule
		$t, \mathbf{t}, \mathbf{T}, \mathcal{T}$ &scalar, vector, matrix, tensor \\
		$\mathcal{T}\in\mathbb{R}^{I_1\times I_2\times\cdots\times I_D}$ &$D$th-order tensor\\
		$\mathbf{T}_{(\mathfrak{D})}$ &the unfolding matrix according to the set $\mathfrak{D}$\\
		$\mathcal{T}(i_1, \cdots, i_D)$ &the $(i_1, \cdots, i_D)$-th entry of tensor\\
		$\mathbb{R}$ &the fields of real number \\
		$rank(\cdot)$ &the rank function of matrix or tensor \\
		$\|\mathbf{T}\|_*$ &the nuclear norm of matrix\\
		$\|\mathcal{T}\|_F$ &the Frobenius norm of matrix or tensor\\
		$\|\mathcal{T}\|_\infty$ &the maximum absolute element tensor.\\
		$\|\mathcal{T}\|_{\ell_p}$ &the $\ell_p$ norm of vectorized tensor $\mathcal{T}$ \\
		$\sigma_i(\mathbf{T})$ &The $i$-th largest singular value of matrix $\mathbf{T}$ \\
		\bottomrule
	\end{tabular}
	\label{tab:notation}
\end{table}
Some frequently used notations in this paper are summarized in Table~\ref{tab:notation}. 
Let the observed tensor be denoted as $\mathcal{Y} = \mathcal{T} + \mathcal{E}$, where $\mathcal{T}$ is the true underlying tensor and $\mathcal{E}$ represents the noise tensor. Suppose we observe a few entries of $\mathcal{Y}$ randomly, with their indices forming the set $\Omega$. The goal is to recover the original tensor $\mathcal{T}$ from the incomplete observations $\mathcal{Y}$, the unconstrained version of above problem can be generally formulated as follows:
\begin{equation}
	\label{equ:lowrankrecovery}
	\min_{\mathcal{T}} \operatorname{rank}(\mathcal{T}),\quad s.t.~  \mathcal{P}_\Omega(\mathcal{T})=\mathcal{P}_\Omega(\mathcal{Y}),
\end{equation}
where the $\operatorname{rank}(\cdot)$ based on the tensor rank definitions, and the multilinear rank and CP rank are the most commonly used.
\begin{definition}
	\label{def:1}
	Let $\mathbf{u}_r^{(d)}\in\mathbb{R}^{I_d}, r\in[R], d\in[D].$ The CP rank of tensor $\mathcal{T}\in\mathbb{R}^{I_1\times I_2\cdots\times I_D}$ is defined as the minimum number of rank-one tensors that sum to $\mathcal{T}$:
	\begin{equation}
		\operatorname{rank}_{CP}(\mathcal{T})=\min\left\{R\in\mathbb{N}:\mathcal{T}=\sum_{r=1}^R\mathbf{u}_r^{(1)}\circ\cdots\circ \mathbf{u}_r^{(D)}\right\}.
	\end{equation}
	Note that $\mathbf{u}_r^{(1)}\circ\mathbf{u}_r^{(2)}\cdots\circ \mathbf{u}_r^{(D)}\in\mathbb{R}^{I_1\times I_2\times\cdots\times I_D}$ is a rank-1 tensor, and the $\circ$ is outer product. The CP decomposition equivalently,
	\begin{equation}
	\mathcal{T}(i_1,\cdots,i_D)=\sum_{r=1}^{R}\mathbf{u}^{(1)}_r(i_1)\mathbf{u}^{(2)}_r(i_2)\cdots\mathbf{u}^{(D)}_r(i_D).
	\end{equation}
	For convenience, we collect all vectors $\mathbf{u}_r^{(d)}\in\mathbb{R}^{I_d}$ on the $d$-th dimension to form the factor matrix $\mathbf{U}^{(d)}\in\mathbb{R}^{R\times I_d}\triangleq[\mathbf{u}_1^{(d)},\mathbf{u}_2^{(d)},\cdots,\mathbf{u}_R^{(d)}]^T$.
\end{definition}
\begin{definition}
	\label{def:4}
	The multilinear rank of the $D$th-order tensor $\mathcal{T}$ is mathematically defined as:
	\begin{equation}
		\operatorname{rank}_{ML}(\mathcal{T}) =\{\operatorname{rank}(\mathbf{T}_{(\{1\})}),\cdots,\operatorname{rank}(\mathbf{T}_{(\{D\})})\}.
	\end{equation}
	where the $\mathbf{T}_{(\{d\})}$ is the mode-$d$ unfolding.
\end{definition}
Minimizing the rank function directly is usually NP-hard, hence we often replace the function $\operatorname{rank}(\cdot)$ by its convex/non-convex surrogate function, such as the Schatten-$p$ quasi-norm.
\begin{definition}
	\label{def:3}
	Extending matrix Schatten-$p$ quasi-norm, the tensor Schatten-$p$ quasi-norm \cite{friedland2018nuclear,fan2023euclideannorminduced} is defined as:
	\begin{equation}
		\|\mathcal{T}\|_{S_p}=\inf\left\{\left(\sum_{r=1}^R|s_r|^p\right)^{1/p}, 0<p\leq1.\right\}
	\end{equation}
	where the $\mathcal{T}=\sum_{r=1}^R s_r\mathbf{\bar{u}}_r^{(1)}\circ\mathbf{\bar{u}}_r^{(2)}\cdots\circ\mathbf{\bar{u}}_r^{(D)}$ and the $\|\mathbf{\bar{u}}_r^{(d)}\|_{\ell_2}=1$. 
	With the tensor nuclear norm being a specific case of the tesnor Schatten-$p$ quasi-norm. 
\end{definition}

The well-known Schatten-$p$ quasi-norm serves as a better surrogate for low-rankness than the nuclear norm as $p \to 0$ \cite{shang2016tractable, shang2020unified, giampouras2020novel, nie2012low}, and it has thus been widely adopted in tensor recovery. For this context, the low-CP-rank and low-multilinear-rank recovery models \cite{fan2023euclideannorminduced, gao2020robust} are formulated as follows:
\begin{align}
	&\min_{\mathcal{T}} \|\mathcal{T}\|^p_{S_p}, \quad s.t.~  \mathcal{P}_\Omega(\mathcal{T})=\mathcal{P}_\Omega(\mathcal{Y}) \label{equ:lowCPrankrecovery},\\
	&\min_{\mathcal{T}} \frac{1}{D}\sum_{d=1}^{D}\|\mathbf{T}_{(d)}\|^p_{S_p},\quad s.t.~ \mathcal{P}_\Omega(\mathcal{T})=\mathcal{P}_\Omega(\mathcal{Y})\label{equ:lowMLRankrecovery}.
\end{align}
In Section~\ref{sec:VSpQuasiNorm}, we prove that when $\mathcal{T}$ admits a CP decomposition, these two surrogates share a common upper bound, which is the variational Schatten-$p$ quasi-norm $\|\mathcal{T}\|^p_{VS_p}$.

\subsection{Parameterization of CP-based Recovery Model}\label{sec:parameterization}
As illustrated in Fig.~\ref{fig:overview}, given that most tensors of interest for recovery in practice are approximately low-rank rather than strictly rank-$R$, we transform the low-CP-rank recovery problem from Eq.~\eqref{equ:lowrankrecovery} which has an overspecified rank $R$ into the following:
\begin{gather}
	\label{equ:Parameterization}
	\min_{\{\mathbf{W}^{(d)}_l\}}\|\mathcal{P}_\Omega(\mathcal{Y}-\mathcal{T})\|_F^2+\mathfrak{R}(\mathcal{T}),\\
	s.t.,\ 
	\mathcal{T}(i_1,i_2,\cdots,i_D)=\sum_{r=1}^{R}\mathbf{u}^{(1)}_r(i_1)\mathbf{u}^{(2)}_r(i_2)\cdots\mathbf{u}^{(D)}_r(i_D),\nonumber\\
	\mathbf{U}^{(d)}(:,i_d)= \underbrace{g(\mathbf{W}^{(d)}_L g(\mathbf{W}^{(d)}_{L-1}\cdots g(\mathbf{W}^{(d)}_1\gamma(i_d))))}_{f_d(i_d)\in\mathbb{R}^{R}}.\nonumber
\end{gather}
Here, $\{\mathbf{W}_l^{(d)}\}_{l=1,d=1}^{L,D}$ denotes the network weights for representing a $D$-order tensor, where each dimension incorporates $L$ layers of multi-layer perceptrons (MLPs). The input to the tensor function $f(\cdot)$ is $\mathbf{x} = [i_1, i_2, \cdots, i_D]$, a $D$-dimensional coordinate vector.
The tensor function $f(\mathbf{x}) = [f_1, f_2, \cdots, f_D](\mathbf{x})$ outputs the data value at any real coordinate within the input domain. Specifically, for beyond-grid data representation with infinite resolution, $\mathbf{x} \in [0, I_1] \times [0, I_2] \times \cdots \times [0, I_D]$; for on-grid data with finite resolution, $\mathbf{x} \in \{1, \cdots, I_1\} \times \{1, \cdots, I_2\} \times \cdots \times \{1, \cdots, I_D\}$.
In summary, $f$ maps a $D$-dimensional coordinate to its corresponding value, i.e., $f : \mathbb{R}^D \cup \mathbb{N}_+^D \rightarrow \mathbb{R}$, thus implicitly representing $D$-dimensional tensor data.

According to the definition~\ref{def:1} of the factor matrix $\mathbf{U}^{(d)}$ in CP decomposition, each latent function was designed to predict the factor matrix $\mathbf{U}^{(d)}$ on each demension:
\begin{equation}
	\mathbf{U}^{(d)}(:,i_d)=f_d(i_d)\triangleq(\phi_{\theta_d}\circ\gamma)(i_d).
\end{equation}
For each dimension, the latent function is composed of two functions, i.e.,
$f_d(\cdot) \triangleq (\phi_{\theta_d} \circ \gamma)(\cdot) : \mathbb{R} \rightarrow \mathbb{R}^{R}.$
Here, $\phi_{\theta_d} : \mathbb{R}^{2m} \rightarrow \mathbb{R}^{R}$ denotes a MLP with weights $\theta_d \triangleq \{\mathbf{W}_l^{(d)}\}_{l=1}^L$, and $g(\cdot)$ represents the activation function. 
To effectively learn high-frequency information and bound the Frobenius norm of the input vectors to MLPs, the Fourier feature mapping $\gamma(\cdot)$ was proposed in \cite{tancik2020fourier}. The function $\gamma(\cdot) : \mathbb{R} \rightarrow \mathbb{R}^{2m}$ maps each coordinate to the surface of a higher-dimensional hypersphere using a set of sinusoidal functions:
\begin{equation}
	\label{equ:gammafunction}
	\begin{split}
		\gamma(i_d)\triangleq&[a_1\cos(2\pi b_1 i_d),a_1\sin(2\pi b_1 i_d),\\
		&\cdots,\\
		&a_m\cos(2\pi b_m i_d),a_m\sin(2\pi b_m i_d)]^T.
	\end{split}
\end{equation}
Where the $a$ are the Fourier series coefficients, and $b$ are corresponding Fourier basis frequencies, both are hyperparameters. This transformation facilitates the learning process by providing a richer representation for the MLP to operate on.

The last term in Equ~\eqref{equ:Parameterization} comprises low-CP-rank regularization based on the variational Schatten-$p$ quasi-norm and smoothness regularization based on the Jacobian.
\begin{equation}
	\label{equ:regularizations}
	\mathfrak{R}(\mathcal{T})=\lambda_{VS_p}\|\mathcal{T}\|^p_{VS_p}+ \frac{\lambda_{J}}{\kappa^2}\mathop{\mathbb{E}}_{\epsilon\sim\mathcal{N}(0,\kappa^2\mathbf{I})}[\Vert f(\mathbf{x}+\epsilon)-f(\mathbf{x})\Vert^2_{\ell_2}].
\end{equation}
We will explain the above two regularizations in the next two subsections.

\subsection{Variational Schatten-p Quasi Norm for CP Decomposition}\label{sec:VSpQuasiNorm}
According to Definition~\ref{def:3}, we reformulate the Schatten-$p$ quasi-norm of a high-order tensor as follows:
\begin{equation}
	\label{equ:S_p-quasi-norm}
	\|\mathcal{T}\|_{S_p}^p=\inf\left\{\sum_{r=1}^R\left(\prod_{d=1}^D\|\mathbf{u}_r^{(d)}\|_{\ell_2}^q\right)^{1/D}\,\bigg|\, 0<q\leq D\right\}.
\end{equation}
Here, $q = pD$ and $|s_r| = \prod_{d=1}^D\|\mathbf{u}_r^{(d)}\|_{\ell_2}$. 
For certain choices of $p$, the factors in Eq.~\eqref{equ:S_p-quasi-norm} often involve non-smooth functions, which are not amenable to gradient-based optimization of neural network weights. To address this issue, we introduce the variational form of the Schatten-$p$ quasi-norm, derived from Eq.~\eqref{equ:S_p-quasi-norm}, into the parameterized CP-based tensor function to automatically prune redundant components when the rank $R$ is overspecified. Under general conditions, the CP decomposition of a tensor is non-unique \cite{liu2012robust, bhaskara2014uniqueness}, and our method aims to learn a sparser CP-based tensor function representation via machine learning.

Inspired by \cite{giampouras2020novel}, we theoretically establish that the tensor-based variational Schatten-$p$ quasi-norm encodes an implicit low-multilinear-rank regularization for any unfolded matrix.
\begin{theorem}
	\label{the:2}
	Follow the definition of Lemma 11 in \cite{ashraphijuo2017fundamental}, for arbitrary nonempty set $\mathfrak{D}\subset\{1,\cdots,D\}$, define $I_\mathfrak{D}\triangleq\prod_{d\in\mathfrak{D}}I_d$ and also denote $\bar{\mathfrak{D}}\triangleq\{1,\cdots,D\}\setminus\mathfrak{D}$. Let $\mathbf{T}_{(\mathfrak{D})}=\sum_{r=1}^{R}\operatorname{vec}(\circ_{d\in\mathfrak{D}}\mathbf{u}_r^{(d)})\operatorname{vec}(\circ_{d\in\bar{\mathfrak{D}}}\mathbf{u}_r^{(d)})^T\in\mathbb{R}^{I_\mathfrak{D}\times I_{\bar{\mathfrak{D}}}}$ be the unfolding of the tensor $\mathcal{T}$ corresponding to the index set $\mathfrak{D}$. 
	Consider $0<p\leq 1$, $p=q/D$, $\operatorname{rank}_{CP}(\mathcal{T})\le R$, for arbitrary nonempty set $\mathfrak{D}$, then:
	\begin{align}
		\|\mathbf{T}_{(\mathfrak{D})}\|_{S_p}^p\leq&\sum_{r=1}^R\left(\prod_{d=1}^D\|\mathbf{u}_r^{(d)}\|_{\ell_2}^q\right)^{1/D}\label{equ:objective}\\
		\leq&\frac{1}{D}\sum_{r=1}^R\sum_{d=1}^D\|\mathbf{u}_r^{(d)}\|_{\ell_2}^q.\label{equ:SumofSpNorm}
	\end{align}
	We denote the tesnor variational $S_p$ quasi-norm as $\|\mathcal{T}\|^p_{VS_p}\triangleq\frac{1}{D}\sum_{r=1}^R\sum_{d=1}^D\|\mathbf{u}_r^{(d)}\|_{\ell_2}^q$.
	The proof is provided in the supplementary material. 
\end{theorem}
It should be noted that \cite{giampouras2020novel} derived a similar inequality between the Schatten-$p$ quasi-norm and the factor vectors of a matrix, and we extend this result to higher-order tensors.
Theorem~\ref{the:2} shows that minimizing the variational Schatten-$p$ quasi-norm effectively imposes a low-rank constraint on the unfolding matrix of each mode, since the mode-$d$ unfolding $\mathbf{T}_{(\{d\})}$ is a special case of $\mathbf{T}_{(\mathfrak{D})}$ where the set $\mathfrak{D}$ contains only one element. Moreover, because $\ell_2$-norm constraints are applied to the $D$ output vectors, this formulation is well suited to the parameterized CP-based tensor function proposed in Eq.~\eqref{equ:Parameterization}, and the sparsity of the CP decomposition can be controlled by adjusting the parameter $p$.

For the choice of $p$, the formulation in Equ~\eqref{equ:SumofSpNorm} avoids non-smooth functions on the factors, whereas non-smooth functions are inherent in Definition~\ref{def:3}. Consequently, the Schatten-$p$ quasi-norm formulation in Equ~\eqref{equ:SumofSpNorm} is more tractable than that in Definition~\ref{def:3} for Adam optimizer.
It is worth noting that the objective function presented in \cite{giampouras2020novel} is a special case of Equ~\eqref{equ:SumofSpNorm}, which can be derived by applying Jensen's inequality in two dimensions case.
To the best of our knowledge, this is the first work to incorporate the variational Schatten-$p$ quasi-norm into the optimization of tensor function representations, and the automatic estimation of CP rank has been achieved through the learning of neural networks.

\subsection{Smooth Regularization with Jacobian}
Neural networks learn nonlinear functions parameterized by compositions of simpler functions. Such functions $f(\mathbf{x})$ are differentiable almost everywhere and can thus be locally approximated by linear maps specified by their Jacobian matrix $\mathbf{J}_f(\mathbf{x})$ \cite{yoshida2017spectral}. The Jacobian matrix captures the first-order partial derivatives of $f$ with respect to $\mathbf{x}$, providing a linear approximation of how $f(\mathbf{x})$ changes around the input $\mathbf{x}$.
According to Taylor's theorem, we have:
\begin{equation}
	f(\mathbf{x} + \varepsilon) = f(\mathbf{x}) + \mathbf{J}_f(\mathbf{x})\varepsilon + \mathcal{O}(\|\varepsilon\|_{\ell_2}^2),
\end{equation}
where $\varepsilon$ represents a small perturbation vector. This expansion shows that for small $\varepsilon$, the change in $f$ can be approximated by its Jacobian matrix $\mathbf{J}_f(\mathbf{x})$, with higher-order terms becoming negligible.
We consider small perturbations $\varepsilon$ as defining the neighborhood around a point $\mathbf{x}$. Reducing the spectral norm $\|\mathbf{J}_f(\mathbf{x})\|_2$ of the Jacobian matrix induced by the vector $\ell_2$ norm promotes smoothness in $f(\cdot)$. The spectral norm measures the maximum factor by which the $\ell_2$ norm of a vector can be magnified by the local linear map at $\mathbf{x}$, thus providing an upper bound on the relative change in the function's output due to these small perturbations:
\begin{equation}
	\frac{\|f(\mathbf{x} + \varepsilon) - f(\mathbf{x})\|_{\ell_2}}{\|\varepsilon\|_{\ell_2}} \leq \|\mathbf{J}_f(\mathbf{x})\|_2.
\end{equation}
In essence, controlling the spectral norm of the Jacobian helps maintain the function's smoothness with respect to input perturbations, ensuring that nearby points in the input space result in correspondingly close outputs.

Iteratively approximating the maximum singular value $\sigma_1$ via SVD is computationally expensive, especially for high-dimensional problems \cite{yoshida2017spectral}, because the cost grows exponentially with the Jacobian size, making it impractical for high-dimensional data. In contrast, the Frobenius norm avoids iteration and scales more favorably with increasing Jacobian size. Given the relationship between the Frobenius and spectral norms, we propose F-norm regularization as an alternative.
By the Schur norm inequality,
\begin{equation}
	\|\mathbf{J}_f(\mathbf{x})\|_2 \le \|\mathbf{J}_f(\mathbf{x})\|_F \le \sqrt{\min(m, n)} \,\|\mathbf{J}_f(\mathbf{x})\|_2,
\end{equation}
where $\mathbf{J}_f(\mathbf{x})$ is the Jacobian matrix of the tensor function. For the special case of $f(\mathbf{x}): \mathbb{R}^D \to \mathbb{R}$ constructed in Section~\ref{sec:parameterization}, the inequalities become equalities because the Jacobian degenerates into a vector when $\sqrt{\min(m,n)}=1$.

Computing the Jacobian matrix of a neural network with respect to its inputs is computationally and memory intensive \cite{luo2025neurtv}. Applying the chain rule for each element involves recursive derivative computations through multiple layers, and matrix operations at each layer add complexity. Storing intermediate results such as activation values and gradients vectors increases memory demand with network depth.
To reduce costs further, we use Hutchinson's trace estimator to approximate the Frobenius norm without explicitly computing $\mathbf{J}_f(\mathbf{x})$. For any matrix $\mathbf{A}$, Hutchinson's estimator states that:
\begin{equation}
	\|\mathbf{A}\|^2_F = \mathop{\mathbb{E}}_{\varepsilon\sim\mathcal{N}(0,\mathbf{I})}[\|\mathbf{A}\varepsilon\|^2_{\ell_2}].
\end{equation}
Applying this to the Jacobian matrix, we have:
\begin{equation}
	\kappa^2 \|\mathbf{J}_f(\mathbf{x})\|^2_F = \mathop{\mathbb{E}}_{\varepsilon\sim\mathcal{N}(0,\kappa^2\mathbf{I})}[\| f(\mathbf{x} + \varepsilon) - f(\mathbf{x}) \|^2_{\ell_2}] + \mathcal{O}(\kappa^2).
	\label{equ:SmoothnessReg}
\end{equation}
the above regularization can exactly capture local correlations across tensor for any direction and any order of derivatives attributed to the implicit and continuous nature of neural domain.
In particular, when $\varepsilon$ is a standard unit vector, the above equation reduces to the classical differential TV regularizer, which connects the classical TV and Jacobian-based smoothness regularization. 

By leveraging the Frobenius norm and Hutchinson's trace estimator, we balance computational efficiency and effective regularization. Our approach avoids SVD and chain-rule derivative computations and the storage of intermediate gradients, ensuring tensor function smoothness while reducing the computational burden of direct Jacobian computation and improving efficiency and performance in high-dimensional settings.

\subsection{Excess Risk Bound for CP-based Tensor Function}
Inspired by Theorem 3 in \cite{fan2021multi}, which establishes an excess risk bound for nonlinear deep tensor factorization under the Tucker decomposition, we extend the theory to our CP-based Implicit neural representation, offering insights into the performance and reliability of our method.
\begin{theorem}
	\label{the:3}
	Suppose $\hat{\mathcal{T}},\{\mathbf{W}^{(d)}_l\in\mathbb{R}^{h_{l}^{(d)}\times h_{l-1}^{(d)}}\}^{L,D}_{l=1,d=1}$ are given by Equ~\eqref{equ:Parameterization}. Suppose the weight decay $\|\mathbf{W}^{(d)}_l\|\leq\beta^{(d)}_l$, $\max(\|\mathcal{Y}\|_\infty,\|\mathcal{T}\|_\infty)\leq\xi$, and the Lipschitz constant of $g$ is $\eta$. Let $N=\prod^D_{d=1}I_d$ Then with probability at least $1-2N^{-1}$, there exists a numerical constant $c$ such that
	\begin{equation}
		\begin{aligned}
			&\frac{1}{\sqrt{N}}\|\mathcal{T}-\hat{\mathcal{T}}\|_F\\
			\leq&\frac{1}{\sqrt{|\Omega|}}\|\mathcal{P}_\Omega(\mathcal{Y}-\hat{\mathcal{T}})\|_F+\frac{1}{\sqrt{N}}\|\mathcal{E}\|_F\\
			&+
			c\xi\left(\frac{\log(\xi^{-1}\tau)\sum_{d=1}^{D}\sum_{l=0}^{L}h_{l}^{(d)} h_{l-1}^{(d)}}{|\Omega|}\right)^{1/4}\\
		\end{aligned}
		\nonumber
	\end{equation}
	where the $h_0^{(d)}=2m$, $h_{-1}^{(d)}=I_d$, $\beta_0^{(d)}=\sqrt{I_d\sum_{1}^{m}a_i^2}$ and $\tau=\eta^{LD}\prod_{d=1}^{D}\prod_{l=0}^L\beta_l^{(d)}$. The proof of the theorem is provided in the supplementary material. 
\end{theorem}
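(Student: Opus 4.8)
The plan is to bound the generalization gap between the empirical reconstruction error on the observed set $\Omega$ and the full-tensor error via a standard Rademacher-complexity argument, following the blueprint of Theorem~3 in \cite{fan2021multi} but adapted to the CP parametrization. First I would introduce the population risk $\frac{1}{N}\|\mathcal{T}-\hat{\mathcal{T}}\|_F^2$ and the empirical risk $\frac{1}{|\Omega|}\|\mathcal{P}_\Omega(\mathcal{Y}-\hat{\mathcal{T}})\|_F^2$, and decompose $\mathcal{Y}=\mathcal{T}+\mathcal{E}$ so that the observed residual splits into the signal error plus the noise term $\frac{1}{\sqrt{N}}\|\mathcal{E}\|_F$; the triangle inequality then isolates the two explicit terms on the right-hand side and reduces the problem to controlling the uniform deviation $\sup_{\hat{\mathcal{T}}\in\mathcal{F}}\bigl|\frac{1}{|\Omega|}\|\mathcal{P}_\Omega(\cdot)\|_F^2-\frac{1}{N}\|\cdot\|_F^2\bigr|$ over the hypothesis class $\mathcal{F}$ of tensors realizable by the network.

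The core of the argument is to bound the empirical Rademacher complexity of $\mathcal{F}$. I would view each entry $\hat{\mathcal{T}}(i_1,\dots,i_D)=\sum_{r=1}^R\prod_{d=1}^D f_d(i_d)_r$ as a composition of the $D$ per-mode MLPs through the multilinear contraction, then apply the standard layer-peeling (contraction) lemma: each activation $g$ with Lipschitz constant $\eta$ contributes a factor $\eta$, and each weight matrix with $\|\mathbf{W}_l^{(d)}\|\le\beta_l^{(d)}$ contributes its spectral bound. The Fourier-feature layer supplies the base bound through $\beta_0^{(d)}=\sqrt{I_d\sum_{i=1}^m a_i^2}$ coming from the amplitude coefficients in \eqref{equ:gammafunction}. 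Multiplying these across all $L$ layers and $D$ modes yields the product $\tau=\eta^{LD}\prod_{d=1}^D\prod_{l=0}^L\beta_l^{(d)}$, which governs the scale of the function class. The boundedness $\max(\|\mathcal{Y}\|_\infty,\|\mathcal{T}\|_\infty)\le\xi$ lets me clip the squared loss and invoke its Lipschitz property so that the complexity of the loss class is controlled by that of $\mathcal{F}$.

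Next I would count parameters: the covering-number/complexity bound scales with the total number of weights $\sum_{d=1}^D\sum_{l=0}^L h_l^{(d)}h_{l-1}^{(d)}$, with the conventions $h_0^{(d)}=2m$ and $h_{-1}^{(d)}=I_d$ accounting for the Fourier-feature input and the coordinate dimension. Combining the Rademacher bound with a McDiarmid/bounded-difference concentration inequality over the random sampling of $\Omega$ produces a high-probability deviation of order $\xi\bigl(\log(\xi^{-1}\tau)\,|\mathrm{params}|\,/\,|\Omega|\bigr)^{1/2}$ for the \emph{squared} risks. The final $1/4$ exponent and the factor $c\xi$ then arise from taking a square root to pass from the squared-error gap back to the $\frac{1}{\sqrt{N}}\|\mathcal{T}-\hat{\mathcal{T}}\|_F$ scale, and the failure probability $2N^{-1}$ comes from the concentration tail with the substitution $N=\prod_{d=1}^D I_d$.

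The main obstacle I anticipate is handling the multilinear contraction $\sum_{r=1}^R\prod_{d=1}^D(\cdot)$ inside the complexity bound: unlike a single feedforward composition, the CP product couples the $D$ branches multiplicatively, so the product of vector-valued outputs is not $1$-Lipschitz and the naive contraction lemma does not apply directly. I expect to need either a vector-valued Rademacher contraction together with a bound on the magnitude of each branch output (which the spectral-norm/weight-decay constraints supply, giving each $\|f_d\|$ a bound that feeds the product), or a careful peeling that treats the outer product as a bilinear form and tracks how the $\beta_l^{(d)}$ bounds propagate through the multiplication across modes. Getting the dependence on $R$ and the product structure to collapse cleanly into the single scale factor $\tau$, without an extra factor of $R$ or a dimension blow-up, is the delicate step; the remaining pieces (clipping, concentration, square-rooting) are routine once that contraction is established.
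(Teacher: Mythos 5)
Your outline follows essentially the same route the paper takes: Theorem~\ref{the:3} is explicitly an adaptation of Theorem~3 of \cite{fan2021multi} from the Tucker to the CP parametrization, and the structure you describe --- split the residual via $\mathcal{Y}=\mathcal{T}+\mathcal{E}$ to isolate the empirical-fit and noise terms, then control the uniform deviation between $\frac{1}{|\Omega|}\|\mathcal{P}_\Omega(\cdot)\|_F^2$ and $\frac{1}{N}\|\cdot\|_F^2$ over the network-realizable class, with the $1/4$ exponent coming from a final square root and the $1-2N^{-1}$ probability from the concentration tail --- is exactly that blueprint. Your diagnosis of the one non-routine step (the multilinear contraction $\sum_{r=1}^R\prod_{d=1}^D(\cdot)$ coupling the $D$ branches) is also correct, and the fix you sketch is the right one: a generalized H\"older bound $\sum_{r}\prod_{d}|u_r^{(d)}|\le\prod_{d}\|\mathbf{u}^{(d)}\|_{\ell_D}\le\prod_{d}\|\mathbf{u}^{(d)}\|_{\ell_2}$ lets the per-branch spectral bounds multiply into the single scale $\tau$ with no extra factor of $R$ (note $R=h_L^{(d)}$ is already counted among the layer widths). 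One refinement you should make explicit: a pure Rademacher layer-peeling/contraction argument yields norm-based complexity bounds and would \emph{not} produce the parameter-count factor $\sum_{d=1}^{D}\sum_{l=0}^{L}h_l^{(d)}h_{l-1}^{(d)}$ appearing in the theorem; that factor, together with the $\log(\xi^{-1}\tau)$, is the signature of a covering-number argument in \emph{parameter space}, where $\tau$ enters as the Lipschitz constant of the map from weights to the output tensor and converts an $\varepsilon$-net on the weights into a net on the function class. Your proposal mentions both tools but blurs which one supplies which factor; committing to the parameter-space covering route (as the bound's form dictates) resolves this and is what the cited argument does.
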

Most common activation functions, such as ReLU and sigmoid, are at worst 1-Lipschitz with respect to the $\ell_2$ norm, ensuring that the factor $\eta^{LD}$ remains reasonably bounded. Theorem~\ref{the:3} provides a probabilistic guarantee on tensor recovery accuracy by bounding the normalized Frobenius norm of the error between the true tensor $\mathcal{T}$ and its estimate $\hat{\mathcal{T}}$.
This bound covers three aspects, the discrepancy between observed values $\mathcal{Y}$ and the estimated tensor over the observed set $\Omega$, the noise in the data, and a complexity penalty that decreases as the observation set grows. The theorem ensures that, with high probability, the estimation error is controlled, providing a strong theoretical foundation for tensor completion.

\section{Experiment}
\subsection{Experimental Details}
We have conducted comprehensive comparison experiments and analysis on all the introduced tasks. Below, we first outline the important experimental settings, followed by a detailed introduction to the baselines, datasets, and results for each task. All experiments were performed on a system equipped with two Intel i7-10700K processors and one NVIDIA RTX 2080Ti GPU, and the details on hyperparameter settings are provided in the support materials.

For fair and rigorous evaluation, we carefully selected and configured experimental settings for each task. The following sections detail the metrics and methodologies for inpainting, denoising, and point cloud upsampling.
For image inpainting and denoising, we used three widely adopted metrics: Peak Signal-to-Noise Ratio (PSNR), Structural Similarity Index Measure (SSIM), and Normalized Root Mean Square Error (NRMSE), which comprehensively assess recovered image quality.
For point cloud upsampling, we employed two standard metrics, Chamfer Distance (CD) \cite{fan2017point} and F-Score \cite{knapitsch2017tanks}, that are well-suited for evaluating upsampled point cloud quality.

\subsection{Multi-Dimensional Image Inpainting}
Multi-dimensional image inpainting \cite{luo2023low,peng2022exact,wang2023guaranteed} recovers the underlying image from an observed one on a meshgrid. Given an observed image $\mathcal{Y}\in\mathbb{R}^{I_1\times I_2\times I_3}$ with observed set $\Omega\subset\Psi$, where $\Psi\triangleq\{(i_1,i_2,i_3)\mid i_1=1,\dots,I_1; i_2=1,\dots,I_2; i_3=1,\dots,I_3\}$, we formulate the problem as in Eq.~\eqref{equ:Parameterization}.
We compared our method with state-of-the-art low-rank tensor-based methods: M\textsuperscript{2}DMT \cite{fan2021multi}, LRTC-ENR \cite{fan2023euclideannorminduced} (solved by L-BFGS \cite{liu1989limited}), HLRTF \cite{luo2022hlrtf}, DeepTensor \cite{saragadam2024deeptensor}, and LRTFR \cite{luo2023low}.
We tested on color images\footnote{\url{https://sipi.usc.edu/database/database.php}}, multispectral images (MSIs) from the CAVE dataset\footnote{\url{https://www.cs.columbia.edu/CAVE/databases/multispectral/}}, and videos\footnote{\url{http://trace.eas.asu.edu/yuv/}}, under random missing with sampling rates (SR) 0.1, 0.15, 0.2, 0.25, and 0.3.

\begin{table*}[h]
	\centering
	\caption{Average quantitative results of multidimensional images by different methods. The best and second results are reported with \textbf{boldface} and \underline{underline}.}
	\label{tab:compareInpainting}
	\resizebox{1.0\linewidth}{!}{
	\begin{tabular}{cc|ccc|ccc|ccc|ccc|ccc}
		\toprule
		\multicolumn{2}{c}{Sampling rate} & \multicolumn{3}{c}{0.1} & \multicolumn{3}{c}{0.15} & \multicolumn{3}{c}{0.2} &\multicolumn{3}{c}{0.25} &\multicolumn{3}{c}{0.3} \\
		\midrule
		Data &Method &PSNR &SSIM &NRMSE &PSNR &SSIM &NRMSE &PSNR &SSIM &NRMSE &PSNR &SSIM &NRMSE &PSNR &SSIM &NRMSE\\
		\midrule
		\multirow{7}{*}{\makecell[c]{Color images\\\textit{Sailboat}\\\textit{House}\\\textit{Peppers}\\\textit{Plane}\\$(512\times 512\times 3)$}} 
		&\textit{Observed} 
		&4.846 &0.023 &0.949 &5.095 &0.030 &0.922 &5.358 &0.038 &0.895 
		&5.638 &0.046 &0.866 &5.938 &0.055 &0.837\\
		&M\textsuperscript{2}DMT\cite{fan2021multi} 
		&22.06 &0.573 &0.145 
		&23.49 &0.650 &0.137 
		&23.89 &0.692 &0.116 
		&24.45 &0.711 &0.103 
		&25.72 &0.722 &0.095\\
		&LRTC-ENR\cite{fan2023euclideannorminduced}
		&\underline{23.56} &\underline{0.628} &\underline{0.128} 
		&24.61 &0.694 &0.114 
		&25.16 &0.707 &0.102 
		&25.98 &0.737 &0.090 
		&26.87 &0.753 &0.081\\
		&HLRTF\cite{luo2022hlrtf} 
		&22.49 &0.540 &0.136 
		&24.41 &0.679 &0.110 
		&25.39 &0.711 &0.097 
		&26.34 &0.742 &0.086 
		&27.17 &0.768 &0.070\\
		&DeepTensor\cite{saragadam2024deeptensor} 
		&21.50 &0.484 &0.150 
		&24.53 &0.682 &0.118 
		&26.31 &0.717 &0.101 
		&26.14 &0.746 &0.087 
		&27.39 &0.771 &0.080 \\
		&LRTFR\cite{luo2023low} 
		&23.03 &0.597 &0.132 
		&\textbf{26.22} &\underline{0.695} &\underline{0.084} 
		&\textbf{27.49} &\underline{0.741} &\underline{0.073} 
		&\textbf{28.40} &\underline{0.769} &\textbf{0.066} 
		&\textbf{29.06} &\underline{0.790} &\textbf{0.062}\\
		&\textbf{CP-Pruner} 
		&\textbf{24.66} &\textbf{0.685} &\textbf{0.099} 
		&\underline{26.05} &\textbf{0.731} &\textbf{0.085} 
		&\underline{27.09} &\textbf{0.762} &\textbf{0.076} 
		&\underline{27.99} &\textbf{0.786} &\underline{0.069} 
		&\underline{28.73} &\textbf{0.806} &\underline{0.063}\\
		\midrule
		\multirow{7}{*}{\makecell[c]{MSIs\\\textit{Toys}\\\textit{Flowers}\\$(512\times 512\times 31)$}}
		&\textit{Observed} 
		&13.96 &0.386 &0.949 
		&14.21 &0.418 &0.922 
		&14.47 &0.447 &0.894 
		&14.75 &0.476 &0.866 
		&15.05 &0.503 &0.836\\
		&M\textsuperscript{2}DMT\cite{fan2021multi} 
		&34.89 &0.910 &0.107 &36.82 &0.928 &0.092 
		&38.19 &0.934 &0.082 &39.09 &0.950 &0.068 
		&40.37 &0.962 &0.055\\
		&LRTC-ENR\cite{fan2023euclideannorminduced}
		&35.91 &0.928 &0.094 
		&37.14 &0.935 &0.080 
		&39.33 &0.950 &0.070 
		&40.36 &0.953 &0.065 
		&40.79 &0.961 &0.053\\
		&HLRTF\cite{luo2022hlrtf} 
		&36.32 &0.935 &0.091 
		&38.64 &0.942 &0.076 
		&40.19 &0.955 &0.067 
		&41.17 &0.967 &0.051 
		&41.70 &0.977 &0.040\\
		&DeepTensor\cite{saragadam2024deeptensor} 
		&38.40 &0.947 &0.088 
		&39.99 &0.951 &0.077 
		&41.20 &0.965 &0.066 
		&42.32 &0.976 &0.043 
		&42.48 &0.986 &\underline{0.032}\\
		&LRTFR\cite{luo2023low} 
		&\underline{40.16} &\underline{0.969} &\underline{0.047} 
		&\underline{42.74} &\underline{0.982} &\underline{0.035} 
		&\underline{44.28} &\underline{0.985} &\underline{0.029} 
		&\underline{44.96} &\underline{0.987} &\underline{0.027} 
		&\underline{45.27} &\underline{0.989} &0.035\\
		&\textbf{CP-Pruner} 
		&\textbf{42.54} &\textbf{0.983} &\textbf{0.035} 
		&\textbf{45.05} &\textbf{0.989} &\textbf{0.027} 
		&\textbf{46.37} &\textbf{0.991} &\textbf{0.023} 
		&\textbf{47.14} &\textbf{0.992} &\textbf{0.021} 
		&\textbf{47.54} &\textbf{0.993} &\textbf{0.020} \\
		\midrule
		\multirow{7}{*}{\makecell[c]{Videos\\\textit{Foreman}\\\textit{Carphone}\\$(144\times 176\times 100)$}}
		&\textit{Observed} 
		&5.548 &0.017 &0.949 &5.797 &0.024 &0.922 &6.059 &0.031 &0.894 &6.340 &0.039 &0.866 &6.640 &0.046 &0.837\\
		&M\textsuperscript{2}DMT\cite{fan2021multi} 
		&23.51 &0.701 &0.124 &25.21 &0.769 &0.102 &26.47 &0.815 &0.095 &28.03 &0.840 &0.083 &28.88 &0.857 &0.078\\
		&LRTC-ENR\cite{fan2023euclideannorminduced}
		&24.23 &0.730 &0.117 &25.91 &0.793 &0.094 &27.56 &0.826 &0.088 &28.86 &0.852 &0.071 &29.73 &0.866 &0.066\\
		&HLRTF\cite{luo2022hlrtf} 
		&24.66 &0.768 &0.104 &26.49 &0.830 &0.085 &28.10 &0.837 &0.071 &29.52 &0.858 &0.063 &30.20 &0.872 &0.052\\
		&DeepTensor\cite{saragadam2024deeptensor} 
		&25.67 &0.813 &0.114 &27.34 &0.855 &0.080 &28.89 &0.851 &0.067 &29.67 &0.869 &0.058 &\underline{30.93}  &\underline{0.880} &\underline{0.050} \\
		&LRTFR\cite{luo2023low} 
		&\underline{28.53} &\underline{0.828} &\textbf{0.067} 
		&\underline{29.36} &\underline{0.854} &\underline{0.061} 
		&\underline{29.77} &\underline{0.866} &\underline{0.058} 
		&\underline{30.09} &\underline{0.873} &\underline{0.056} 
		&30.26 &0.876 &0.055\\
		&\textbf{CP-Pruner} 
		&\textbf{28.63} &\textbf{0.850} &\textbf{0.067} 
		&\textbf{30.08} &\textbf{0.874} &\textbf{0.056} 
		&\textbf{31.50} &\textbf{0.900} &\textbf{0.048} 
		&\textbf{32.51} &\textbf{0.913} &\textbf{0.043} 
		&\textbf{33.45} &\textbf{0.928} &\textbf{0.038} \\
		\bottomrule
	\end{tabular}}
\end{table*}
\begin{figure*}[h]
	\centering
	\begin{minipage}{1.\linewidth}
		\centering
		\subfloat[PSNR 5.609]
		{\includegraphics[width=0.115\linewidth]{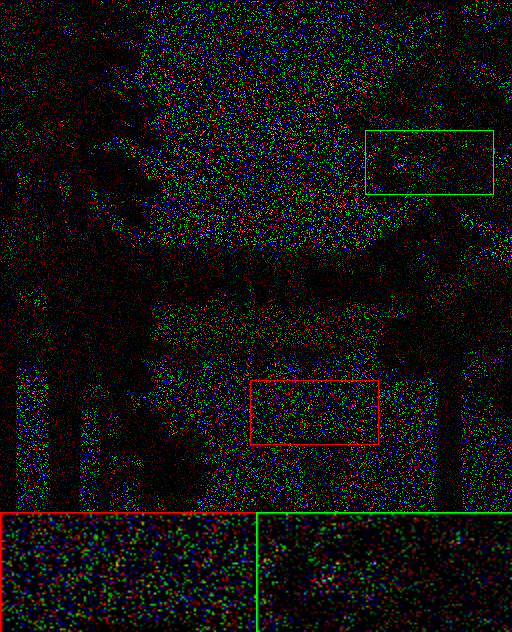}}
		\hspace{0.01cm}
		\subfloat[PSNR 19.61]
		{\includegraphics[width=0.115\linewidth]{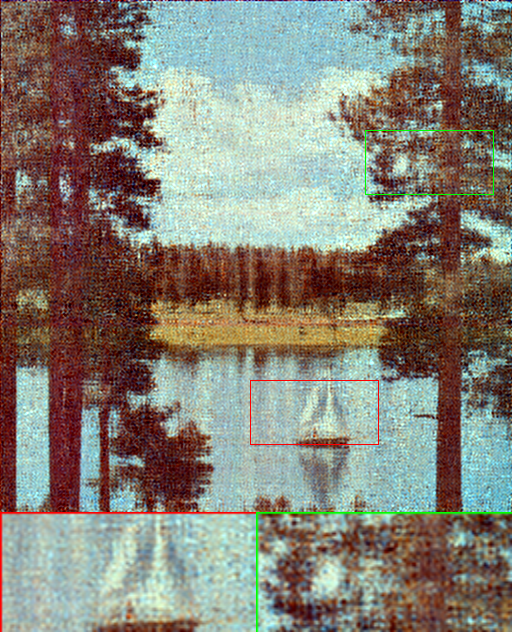}}
		\hspace{0.01cm}
		\subfloat[PSNR 20.40]
		{\includegraphics[width=0.115\linewidth]{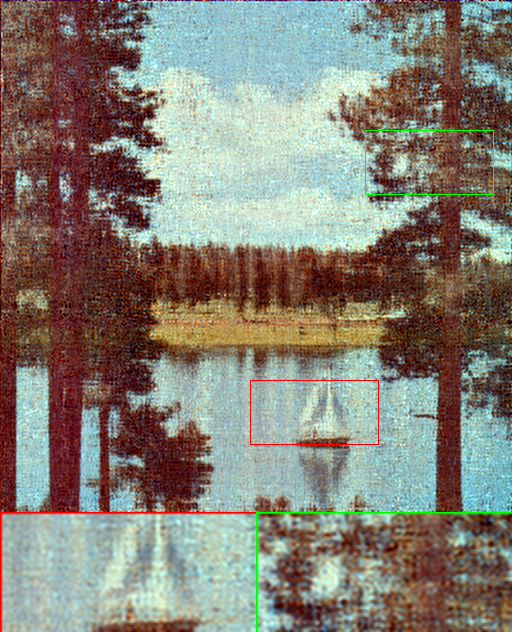}}
		\hspace{0.01cm}
		\subfloat[PSNR 21.21]
		{\includegraphics[width=0.115\linewidth]{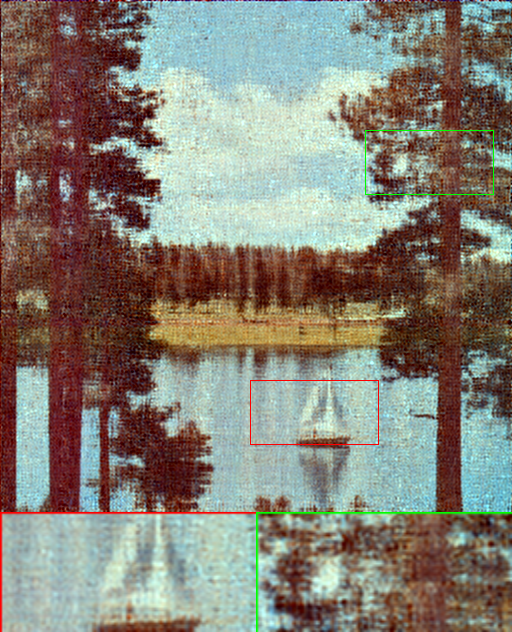}}
		\hspace{0.01cm}
		\subfloat[PSNR 21.63]
		{\includegraphics[width=0.115\linewidth]{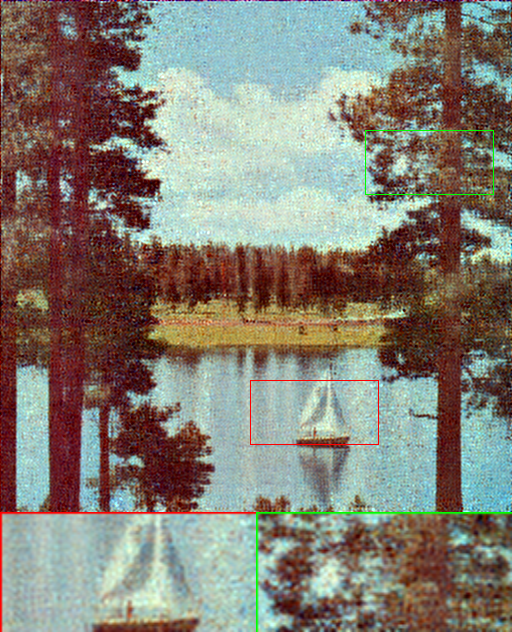}}
		\hspace{0.01cm}
		\subfloat[PSNR 21.86]
		{\includegraphics[width=0.115\linewidth]{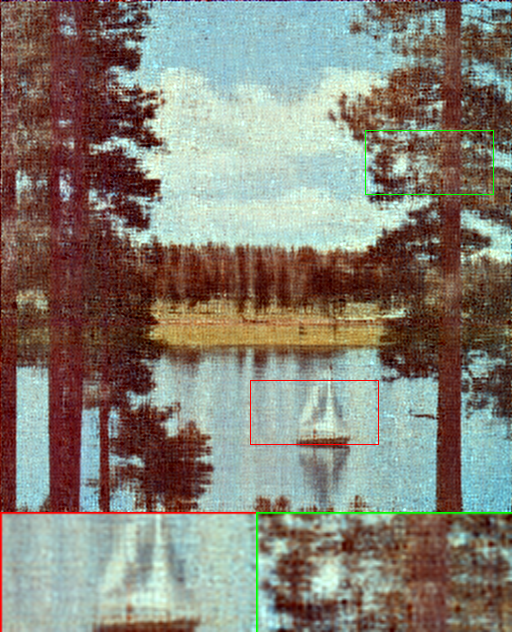}}
		\hspace{0.01cm}
		\subfloat[PSNR 22.63]
		{\includegraphics[width=0.115\linewidth]{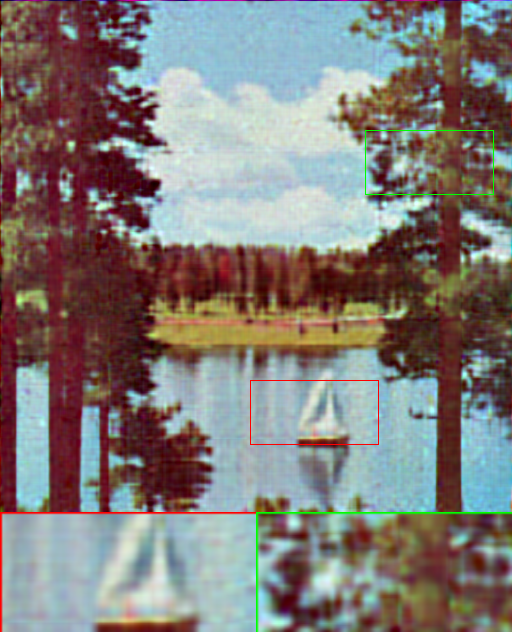}}
		\hspace{0.01cm}
		\subfloat[PSNR Inf]
		{\includegraphics[width=0.115\linewidth]{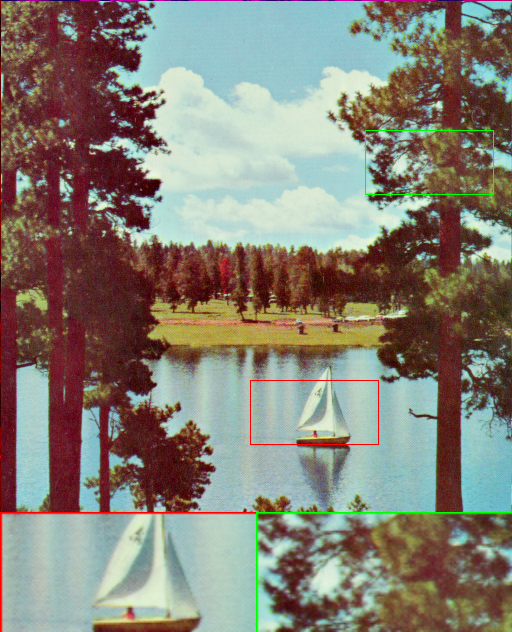}}
	\end{minipage}
	\vskip -0.3cm 
	\begin{minipage}{1.\linewidth}
		\centering
		\subfloat[PSNR 16.43]
		{\includegraphics[width=0.115\linewidth]{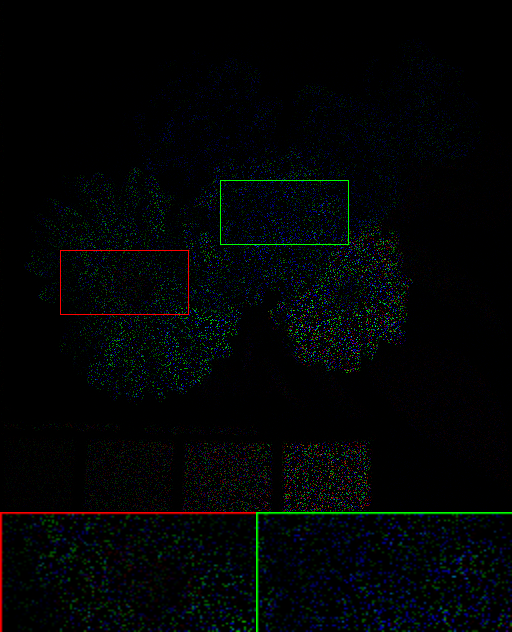}}
		\hspace{0.01cm}
		\subfloat[PSNR 38.60]
		{\includegraphics[width=0.115\linewidth]{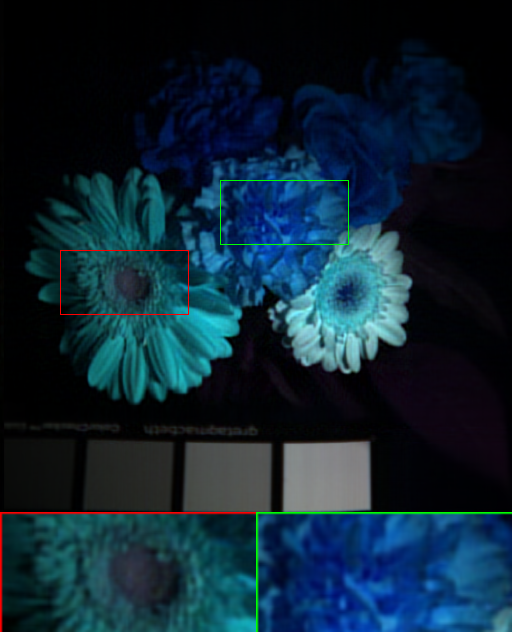}}
		\hspace{0.01cm}
		\subfloat[PSNR 39.43]
		{\includegraphics[width=0.115\linewidth]{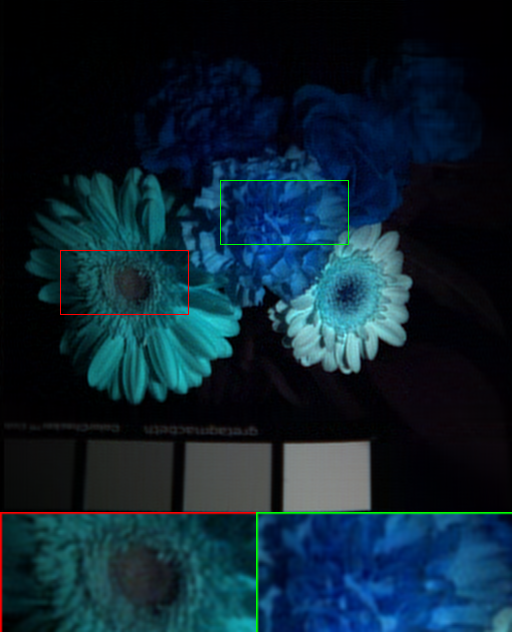}}
		\hspace{0.01cm}
		\subfloat[PSNR 41.03]
		{\includegraphics[width=0.115\linewidth]{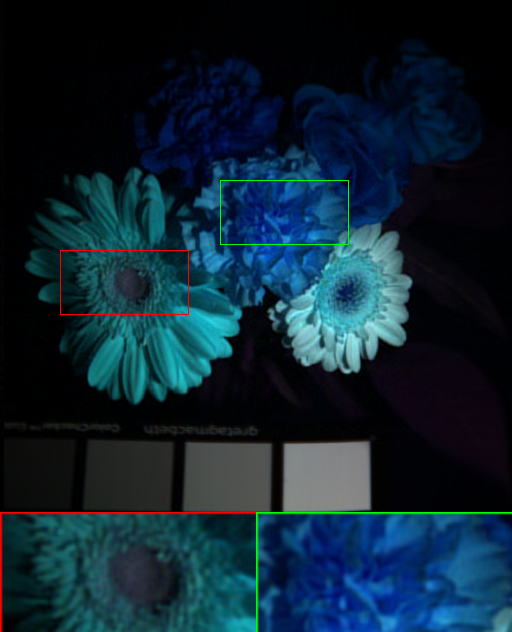}}
		\hspace{0.01cm}
		\subfloat[PSNR 42.28]
		{\includegraphics[width=0.115\linewidth]{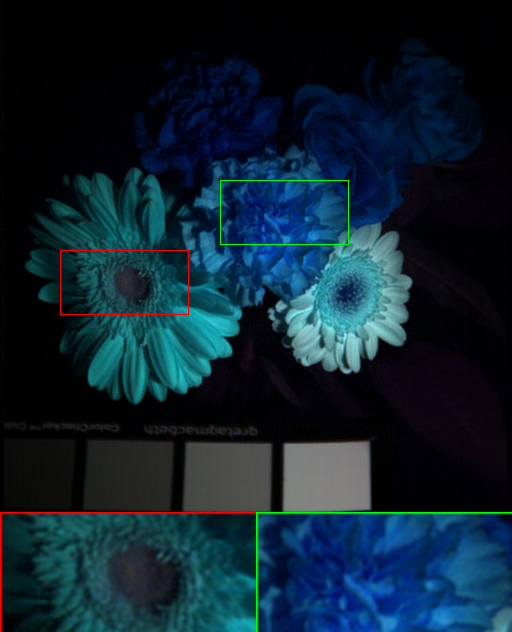}}
		\hspace{0.01cm}
		\subfloat[PSNR 43.25]
		{\includegraphics[width=0.115\linewidth]{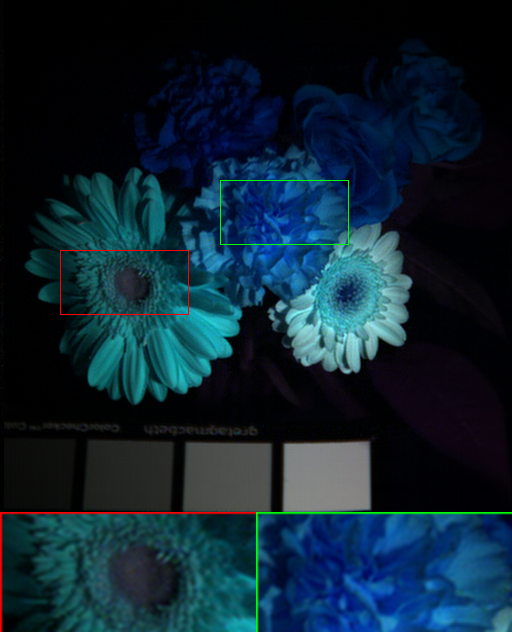}}
		\hspace{0.01cm}
		\subfloat[PSNR 44.28]
		{\includegraphics[width=0.115\linewidth]{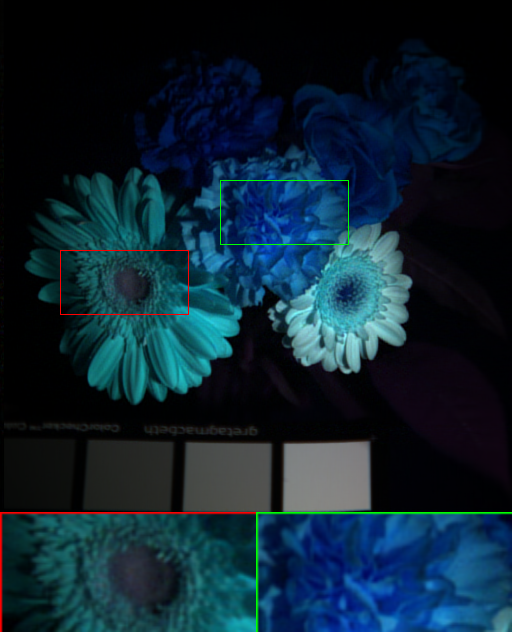}}
		\hspace{0.01cm}
		\subfloat[PSNR Inf]
		{\includegraphics[width=0.115\linewidth]{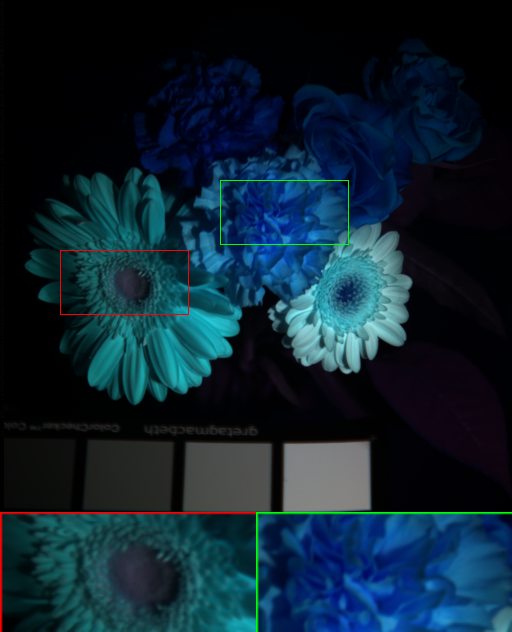}}
	\end{minipage}
	\vskip -0.3cm 
	\begin{minipage}{1.\linewidth}
		\centering
		\subfloat[\begin{tabular}{c} PSNR 7.049\\ Observed\end{tabular}]
		{\includegraphics[width=0.115\linewidth]{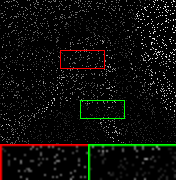}}
		\hspace{0.01cm}
		\subfloat[\begin{tabular}{c} PSNR 26.95\\ M\textsuperscript{2}DMT\cite{fan2021multi} \end{tabular}]
		{\includegraphics[width=0.115\linewidth]{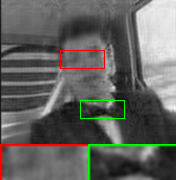}}
		\hspace{0.01cm}
		\subfloat[\begin{tabular}{c} PSNR 27.25\\ LRTC-ENR\cite{fan2023euclideannorminduced} \end{tabular}]
		{\includegraphics[width=0.115\linewidth]{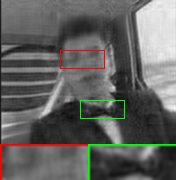}}
		\hspace{0.01cm}
		\subfloat[\begin{tabular}{c}PSNR 28.46\\ HLRTF\cite{luo2022hlrtf} \end{tabular}]
		{\includegraphics[width=0.115\linewidth]{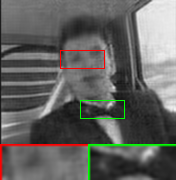}}
		\hspace{0.01cm}
		\subfloat[\begin{tabular}{c}PSNR 29.53\\ DeepTensor\cite{saragadam2024deeptensor} \end{tabular}]
		{\includegraphics[width=0.115\linewidth]{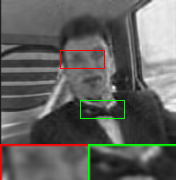}}
		\hspace{0.01cm}
		\subfloat[\begin{tabular}{c}PSNR 29.94\\ LRTFR\cite{luo2023low} \end{tabular}]
		{\includegraphics[width=0.115\linewidth]{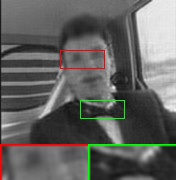}}
		\hspace{0.01cm}
		\subfloat[\begin{tabular}{c}PSNR 30.13\\ CP-Pruner \end{tabular}]
		{\includegraphics[width=0.115\linewidth]{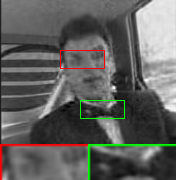}}
		\hspace{0.01cm}
		\subfloat[\begin{tabular}{c}PSNR Inf\\ Original \end{tabular}]
		{\includegraphics[width=0.115\linewidth]{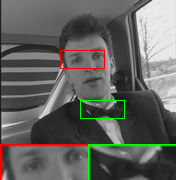}}
	\end{minipage}
	\caption{Results of multi-dimensional image inpainting by different methods on color images \textit{Sailboat}, multispectral image \textit{Flowers} and video \textit{Carphone} with SR=0.1.}
	\label{fig:Demo4Inpainting}
\end{figure*}
Quantitative and qualitative results for multi-dimensional image inpainting are presented in Table~\ref{tab:compareInpainting} and Fig.~\ref{fig:Demo4Inpainting}. Our method achieves the best performance both quantitatively and qualitatively, outperforming classical low-rank tensor representations. This strong performance stems from its ability to simultaneously encode low-rankness and smoothness.
Notably, our method consistently achieves the highest SSIM across all tasks, primarily due to the proposed Jacobian-based smoothness regularization that explicitly enforces local similarity at flexible spatial scales.

\subsection{Multispectral Image Denoising}
\begin{table*}[t]
	\centering
	\caption{The average quantitative results by different methods for multispectral image denoising.}
	\label{tab:compareDenoise}
	\resizebox{1.0\linewidth}{!}{
		\begin{tabular}{cc|ccc|ccc|ccc|ccc|ccc}
			\toprule
			\multicolumn{2}{c}{Noise case} &\multicolumn{3}{c}{Case 1} & \multicolumn{3}{c}{Case 2} & \multicolumn{3}{c}{Case 3} &\multicolumn{3}{c}{Case 4} &\multicolumn{3}{c}{Case 5} \\
			\midrule
			Data &Method &PSNR &SSIM &NRMSE &PSNR &SSIM &NRMSE &PSNR &SSIM &NRMSE &PSNR &SSIM &NRMSE &PSNR &SSIM &NRMSE\\
			\midrule
			\multirow{7}{*}{\makecell[c]{HSIs\\\textit{WDC mall}\\$(1280\times 307\times 191)$}}
			&\textit{Observed} 
			&15.44 &0.156 &0.589 &16.12 &0.201 &0.545 &15.20 &0.171 &0.606
			&15.73 &0.191 &0.570 &14.91 &0.165 &0.626\\
			&M\textsuperscript{2}DMT\cite{fan2021multi} 
			&27.23 &0.720 &0.186 
			&27.93 &0.809 &0.155 
			&26.94 &0.782 &0.145  
			&24.94 &0.751 &0.181 
			&24.35 &0.733 &0.187\\
			&LRTC-ENR\cite{fan2023euclideannorminduced}
			&28.07 &0.725 &0.157 
			&28.41 &0.820 &0.126 
			&28.79 &0.792 &0.110  
			&26.25 &0.768 &0.167 
			&25.21 &0.746 &0.151\\
			&HLRTF\cite{luo2022hlrtf} 
			&27.76 &0.736 &0.144 
			&28.95 &0.826 &0.120 
			&\underline{30.87} &\underline{0.839} &\underline{0.093}
			&28.03 &0.781 &0.151 
			&27.69 &0.742 &\underline{0.144}\\
			&DeepTensor\cite{saragadam2024deeptensor}
			&26.41 &0.741 &0.141 
			&27.88 &0.809 &0.161 
			&27.66 &0.771 &0.163  
			&25.81 &0.754 &0.158 
			&24.11 &0.734 &0.179\\
			&LRTFR\cite{luo2023low} 
			&\underline{30.89} &\underline{0.841} &\underline{0.100} 
			&\underline{33.35} &\underline{0.898} &\underline{0.075} 
			&31.08 &0.884 &0.098
			&\underline{28.11} &\underline{0.812} &\underline{0.137} 
			&\underline{27.96} &\underline{0.808} &0.140\\
			&\textbf{CP-Pruner} 
			&\textbf{31.54} &\textbf{0.873} &\textbf{0.092} 
			&\textbf{35.56} &\textbf{0.937} &\textbf{0.058} 
			&\textbf{32.83} &\textbf{0.918} &\textbf{0.080} 
			&\textbf{28.54} &\textbf{0.845} &\textbf{0.131} 
			&\textbf{28.66} &\textbf{0.838} &\textbf{0.129}\\
			\midrule
			\multirow{7}{*}{\makecell[c]{MSIs\\\textit{PaviaU}\\$(610\times 340\times 103)$}}
			&\textit{Observed} 
			&15.54 &0.150 &0.815 &16.11 &0.195 &0.763 &15.81 &0.172 &0.790
			&15.95 &0.185 &0.778 &15.67 &0.165 &0.803\\
			&M\textsuperscript{2}DMT\cite{fan2021multi} 
			&26.98 &0.711 &0.193 
			&27.72 &0.811 &0.189 
			&27.13 &0.822 &0.173
			&24.95 &0.752 &0.199 
			&25.36 &0.762 &0.233\\
			&LRTC-ENR\cite{fan2023euclideannorminduced}
			&28.58 &0.774 &0.181 
			&29.38 &0.826 &0.155 
			&28.23 &0.831 &0.169
			&26.61 &0.767 &0.183 
			&\underline{27.92} &\underline{0.808} &\underline{0.189}\\
			&HLRTF\cite{luo2022hlrtf}  
			&29.12 &0.787 &0.169 
			&30.08 &0.842 &0.132 
			&28.82 &0.838 &0.160 
			&27.89 &0.772 &0.176 
			&25.47 &0.752 &0.247\\
			&DeepTensor\cite{saragadam2024deeptensor} 
			&26.73 &0.720 &0.190 
			&27.66 &0.820 &0.179 
			&27.52 &0.820 &0.167
			&26.04 &0.788 &0.182 
			&24.83 &0.735 &0.263\\
			&LRTFR\cite{luo2023low}
			&\underline{29.52} &\underline{0.794} &\underline{0.163} 
			&\underline{31.64} &\underline{0.865} &\underline{0.128} 
			&\underline{29.84} &\underline{0.841} &\underline{0.157}
			&\underline{28.11} &\underline{0.797} &\underline{0.192} 
			&26.44 &0.762 &0.232\\
			&\textbf{CP-Pruner} 
			&\textbf{30.45} &\textbf{0.827} &\textbf{0.147} 
			&\textbf{33.69} &\textbf{0.905} &\textbf{0.101} 
			&\textbf{31.98} &\textbf{0.877} &\textbf{0.123} 
			&\textbf{29.27} &\textbf{0.846} &\textbf{0.168} 
			&\textbf{29.15} &\textbf{0.843} &\textbf{0.170}\\
			\midrule
			\multirow{7}{*}{\makecell[c]{MSIs\\\textit{Balloons}\\\textit{Beads}\\\textit{Flowers}\\\textit{Fruits}\\$(512\times 512\times 31)$}}
			&\textit{Observed} 
			&16.22 &0.084 &0.902 &16.26 &0.109 &0.900 &16.12 &0.101 &0.912 
			&16.20 &0.107 &0.906 &16.07 &0.101 &0.917\\
			&M\textsuperscript{2}DMT\cite{fan2021multi} 
			&29.80 &0.720 &0.158 
			&30.94 &0.748 &0.136 
			&30.91 &0.747 &0.170 
			&28.32 &0.731 &0.179 
			&27.17 &0.769 &0.230\\
			&LRTC-ENR\cite{fan2023euclideannorminduced}
			&31.26 &\underline{0.756} &\underline{0.152}
			&\underline{33.88} &\underline{0.845} &\underline{0.127} 
			&31.49 &0.770 &0.156 
			&28.96 &0.753 &0.174 
			&28.07 &0.780 &0.216\\
			&HLRTF\cite{luo2022hlrtf} 
			&30.57 &0.731 &0.159
			&32.75 &0.781 &0.152 
			&31.51 &0.791 &0.158 
			&29.53 &0.756 &0.173 
			&27.93 &0.774 &0.228\\
			&DeepTensor\cite{saragadam2024deeptensor} 
			&29.97 &0.725 &0.155 
			&31.03 &0.747 &0.140 
			&30.79 &0.786 &0.169 
			&29.77 &0.741 &0.179 
			&27.89 &0.772 &0.221\\
			&LRTFR\cite{luo2023low} 
			&\underline{31.32} &0.736 &0.167
			&32.89 &0.784 &0.141 
			&\underline{31.96} &\underline{0.794} &\underline{0.153} 
			&\underline{31.27} &\underline{0.776} &\underline{0.162} 
			&\textbf{29.97} &\underline{0.782} &\textbf{0.187}\\
			&\textbf{CP-Pruner}  
			&\textbf{33.63} &\textbf{0.862} &\textbf{0.130} 
			&\textbf{36.35} &\textbf{0.899} &\textbf{0.095} 
			&\textbf{33.14} &\textbf{0.889} &\textbf{0.132} 
			&\textbf{31.31} &\textbf{0.843} &\textbf{0.160} 
			&\underline{29.22} &\textbf{0.829} &\underline{0.203}\\
			\bottomrule
	\end{tabular}}
\end{table*}
\begin{figure*}[h]
	\centering
	\begin{minipage}{1.\linewidth}
		\centering
		\subfloat[PSNR 15.40]
		{\includegraphics[width=0.115\linewidth]{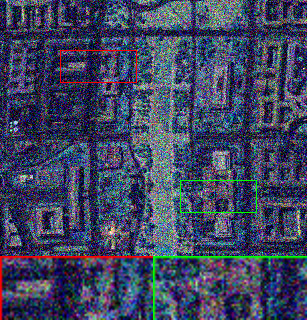}}
		\hspace{0.01cm}
		\subfloat[PSNR 27.40]
		{\includegraphics[width=0.115\linewidth]{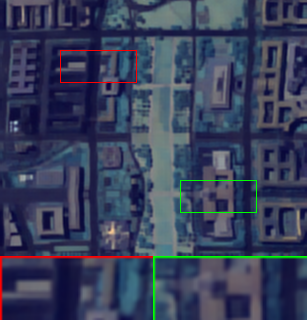}}
		\hspace{0.01cm}
		\subfloat[PSNR 27.58]
		{\includegraphics[width=0.115\linewidth]{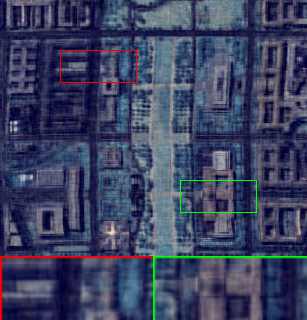}}
		\hspace{0.01cm}
		\subfloat[PSNR 27.99]
		{\includegraphics[width=0.115\linewidth]{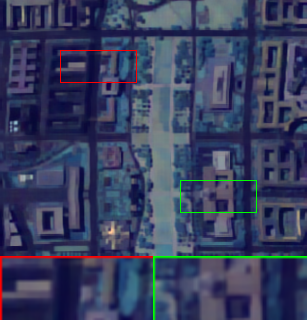}}
		\hspace{0.01cm}
		\subfloat[PSNR 28.02]
		{\includegraphics[width=0.115\linewidth]{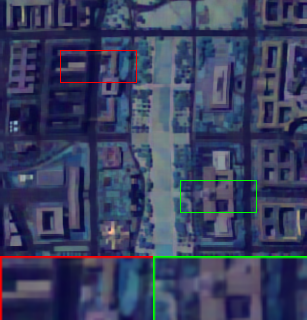}}
		\hspace{0.01cm}
		\subfloat[PSNR 30.56]
		{\includegraphics[width=0.115\linewidth]{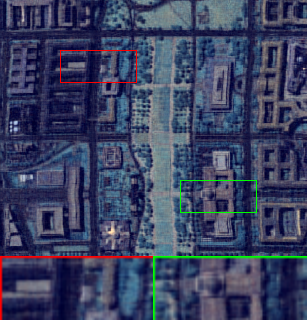}}
		\hspace{0.01cm}
		\subfloat[PSNR 31.53]
		{\includegraphics[width=0.115\linewidth]{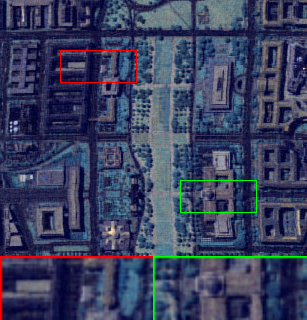}}
		\hspace{0.01cm}
		\subfloat[PSNR Inf]
		{\includegraphics[width=0.115\linewidth]{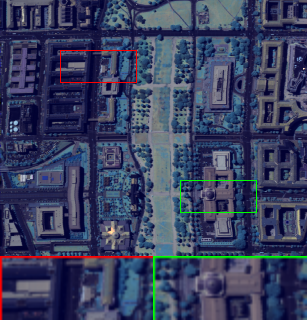}}
	\end{minipage}
	\vskip -0.3cm 
	\begin{minipage}{1.\linewidth}
		\centering
		\subfloat[PSNR 16.11]
		{\includegraphics[width=0.115\linewidth]{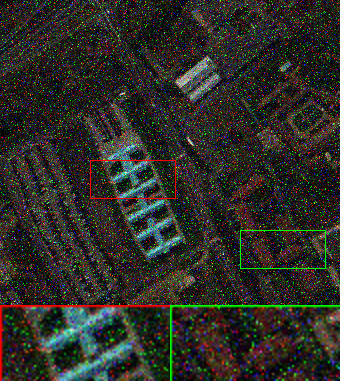}}
		\hspace{0.01cm}
		\subfloat[PSNR 27.81]
		{\includegraphics[width=0.115\linewidth]{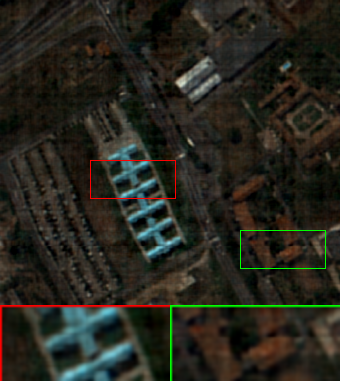}}
		\hspace{0.01cm}
		\subfloat[PSNR 29.76]
		{\includegraphics[width=0.115\linewidth]{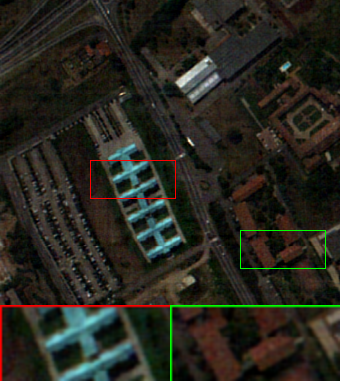}}
		\hspace{0.01cm}
		\subfloat[PSNR 30.91]
		{\includegraphics[width=0.115\linewidth]{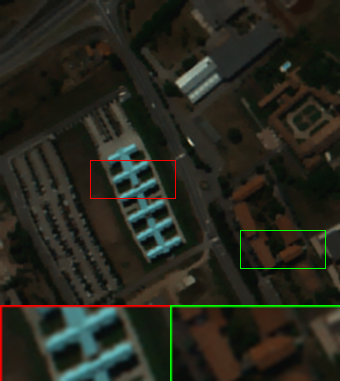}}
		\hspace{0.01cm}
		\subfloat[PSNR 27.70]
		{\includegraphics[width=0.115\linewidth]{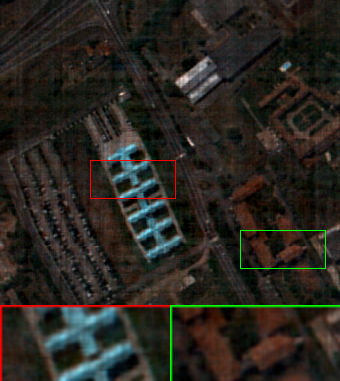}}
		\hspace{0.01cm}
		\subfloat[PSNR 31.59]
		{\includegraphics[width=0.115\linewidth]{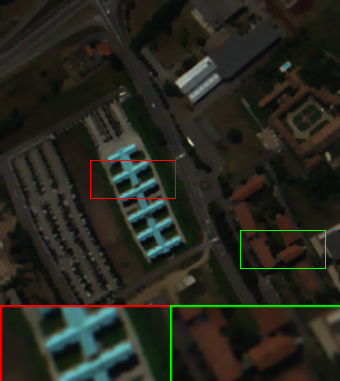}}
		\hspace{0.01cm}
		\subfloat[PSNR 33.71]
		{\includegraphics[width=0.115\linewidth]{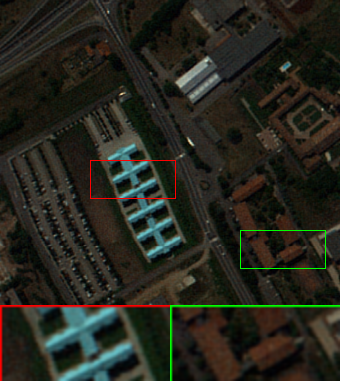}}
		\hspace{0.01cm}
		\subfloat[PSNR Inf]
		{\includegraphics[width=0.115\linewidth]{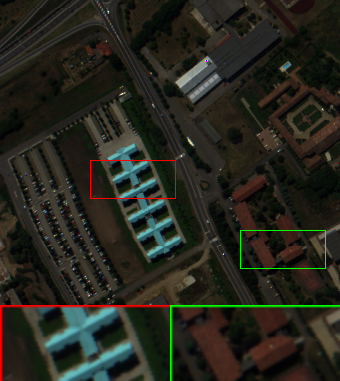}}
	\end{minipage}
	\vskip -0.3cm 
	\begin{minipage}{1.\linewidth}
		\centering
		\subfloat[PSNR 15.94]
		{\includegraphics[width=0.115\linewidth]{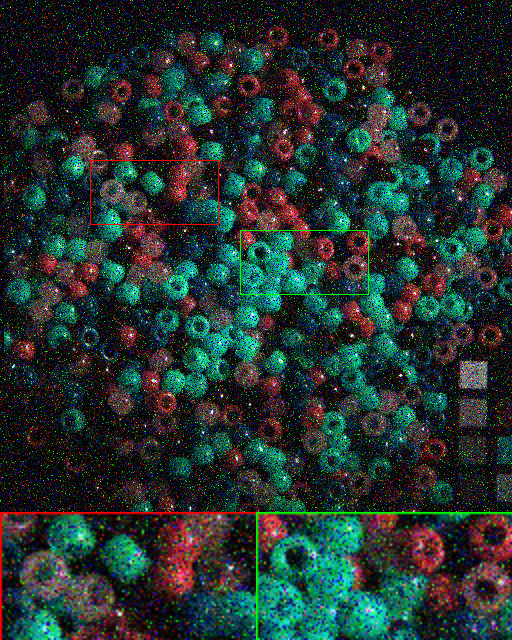}}
		\hspace{0.01cm}
		\subfloat[PSNR 26.68]
		{\includegraphics[width=0.115\linewidth]{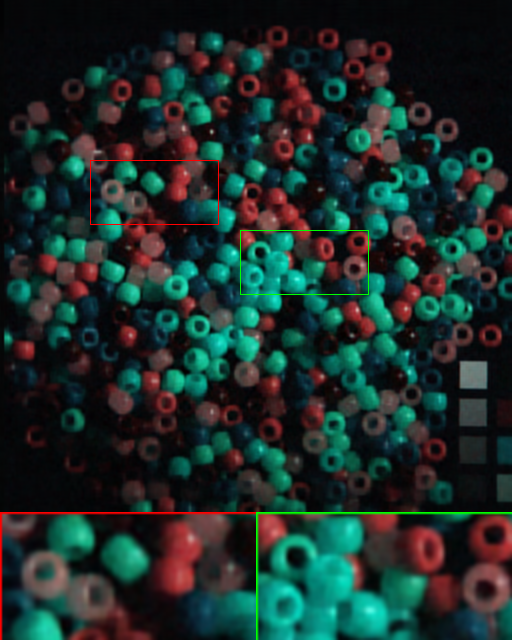}}
		\hspace{0.01cm}
		\subfloat[PSNR 27.18]
		{\includegraphics[width=0.115\linewidth]{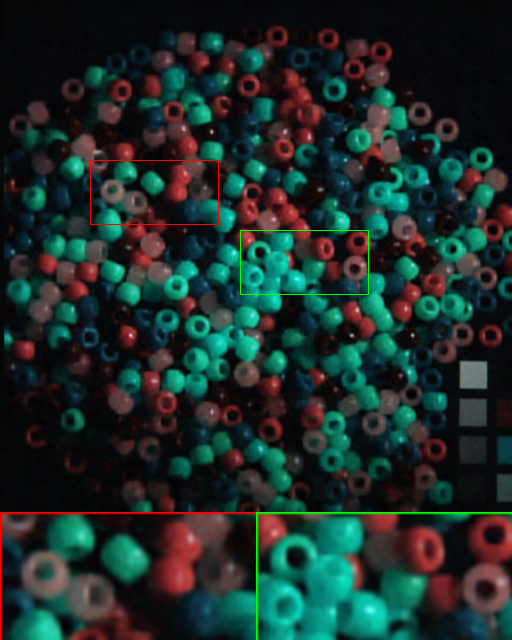}}
		\hspace{0.01cm}
		\subfloat[PSNR 27.20]
		{\includegraphics[width=0.115\linewidth]{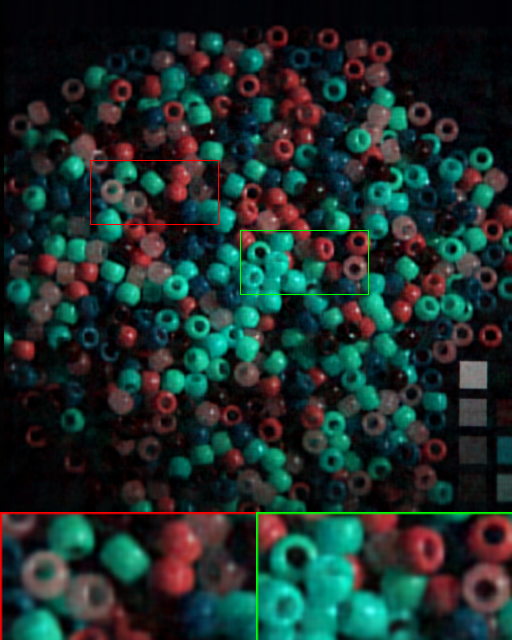}}
		\hspace{0.01cm}
		\subfloat[PSNR 26.08]
		{\includegraphics[width=0.115\linewidth]{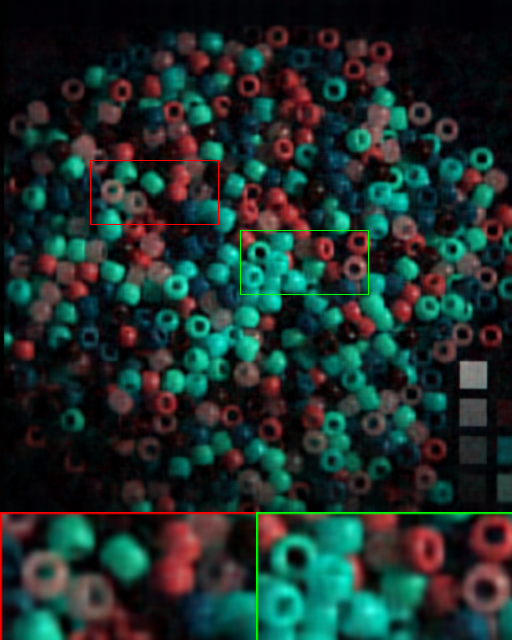}}
		\hspace{0.01cm}
		\subfloat[PSNR 27.93]
		{\includegraphics[width=0.115\linewidth]{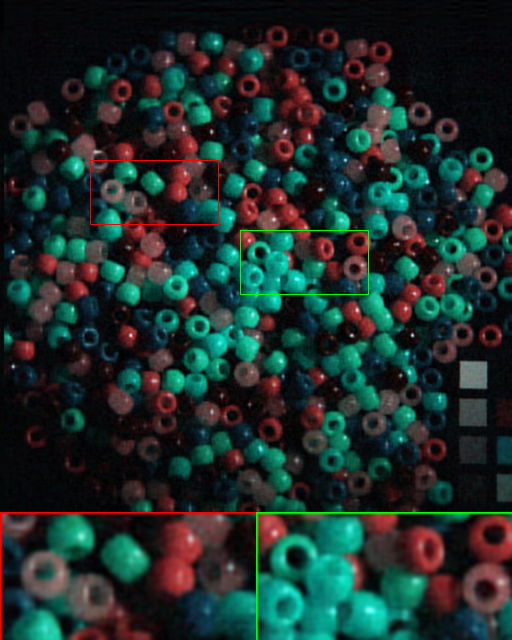}}
		\hspace{0.01cm}
		\subfloat[PSNR 28.62]
		{\includegraphics[width=0.115\linewidth]{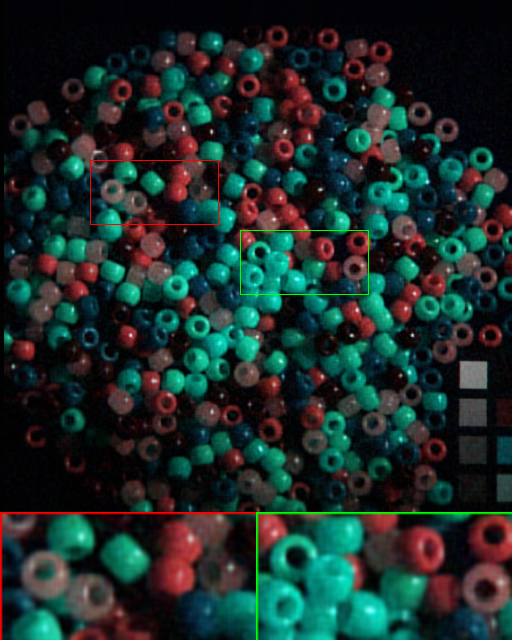}}
		\hspace{0.01cm}
		\subfloat[PSNR Inf]
		{\includegraphics[width=0.115\linewidth]{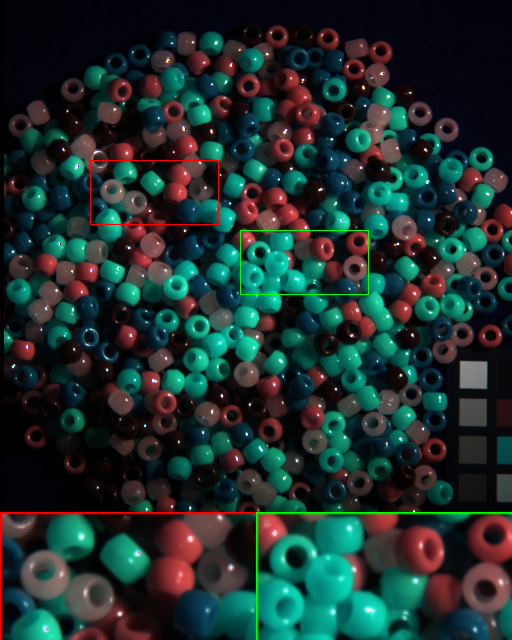}}
	\end{minipage}
	\vskip -0.3cm 
	\begin{minipage}{1.\linewidth}
		\centering
		\subfloat[PSNR 15.90]
		{\includegraphics[width=0.115\linewidth]{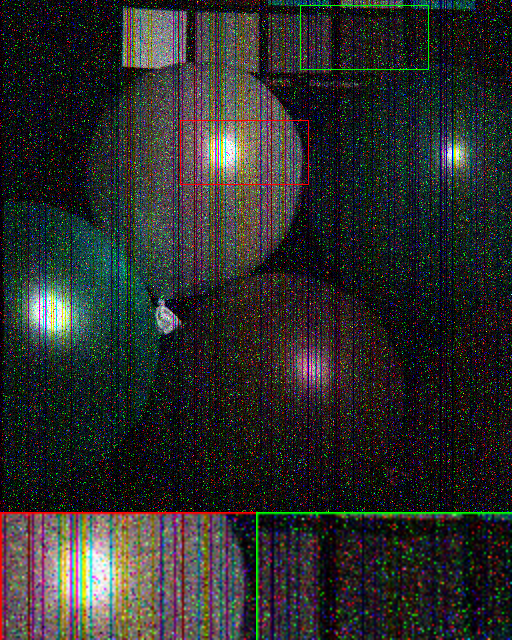}}
		\hspace{0.01cm}
		\subfloat[PSNR 29.89]
		{\includegraphics[width=0.115\linewidth]{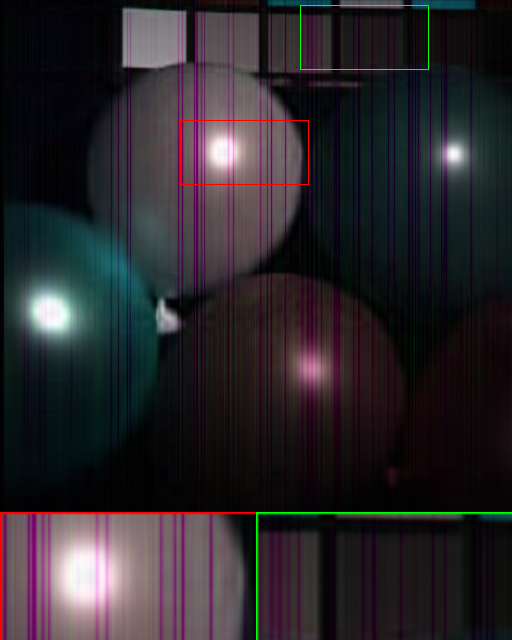}}
		\hspace{0.01cm}
		\subfloat[PSNR 32.06]
		{\includegraphics[width=0.115\linewidth]{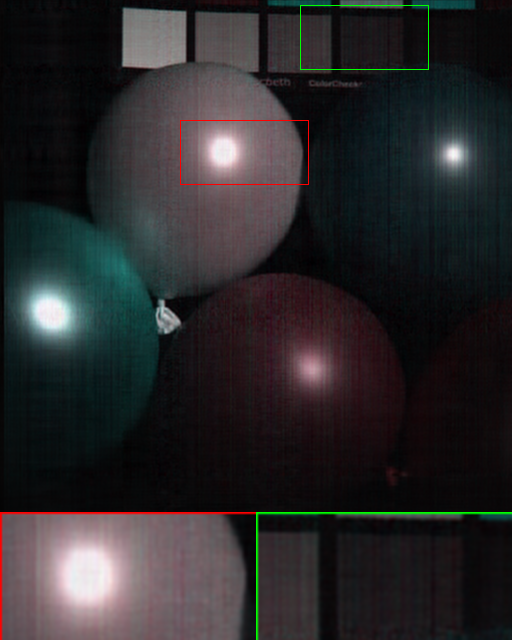}}
		\hspace{0.01cm}
		\subfloat[PSNR 32.41]
		{\includegraphics[width=0.115\linewidth]{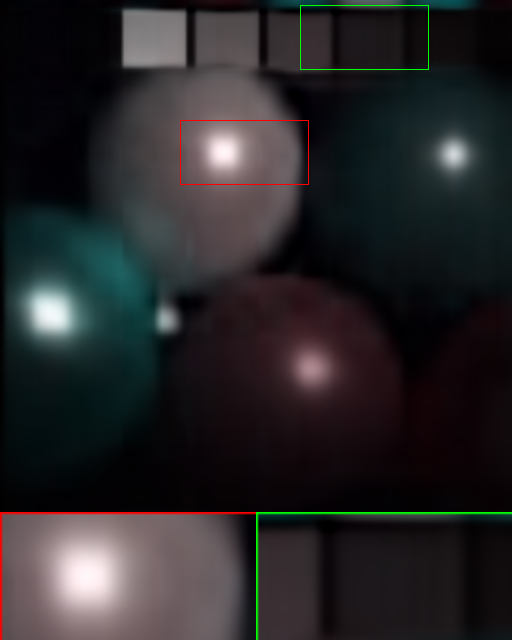}}
		\hspace{0.01cm}
		\subfloat[PSNR 30.75]
		{\includegraphics[width=0.115\linewidth]{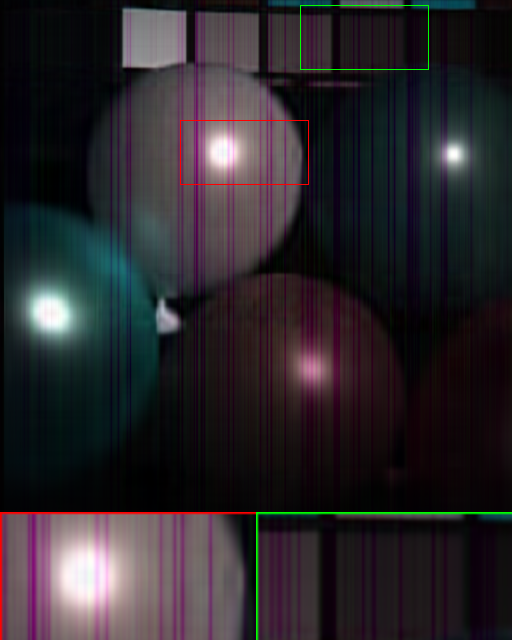}}
		\hspace{0.01cm}
		\subfloat[PSNR 33.80]
		{\includegraphics[width=0.115\linewidth]{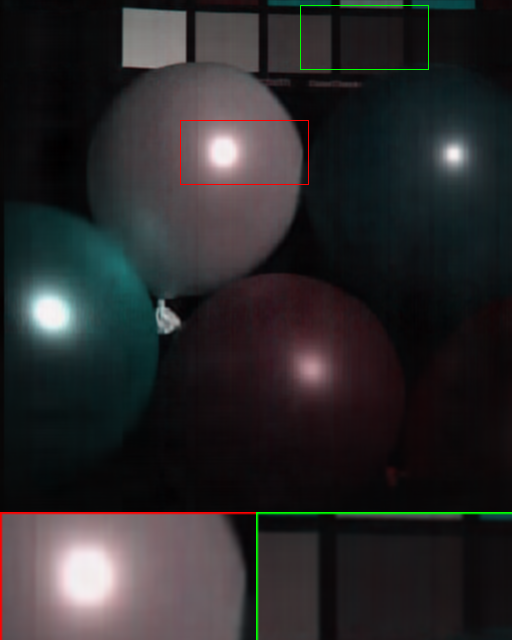}}
		\hspace{0.01cm}
		\subfloat[PSNR 34.55]
		{\includegraphics[width=0.115\linewidth]{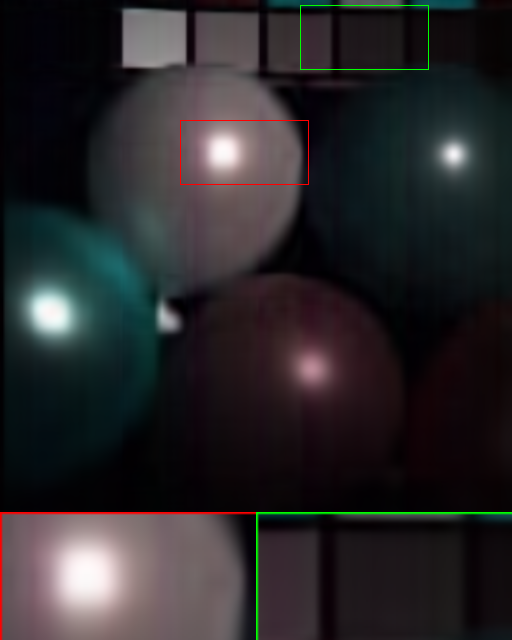}}
		\hspace{0.01cm}
		\subfloat[PSNR Inf]
		{\includegraphics[width=0.115\linewidth]{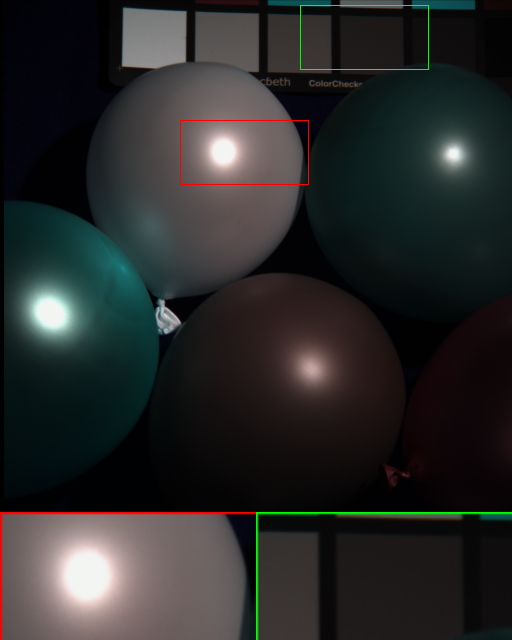}}
	\end{minipage}
	\vskip -0.3cm 
	\begin{minipage}{1.\linewidth}
		\centering
		\subfloat[\begin{tabular}{c} PSNR 16.49\\ Observed\end{tabular}]
		{\includegraphics[width=0.115\linewidth]{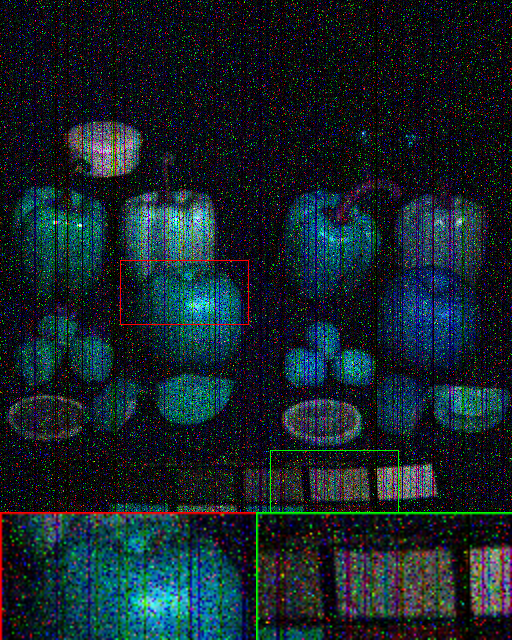}}
		\hspace{0.01cm}
		\subfloat[\begin{tabular}{c} PSNR 28.55\\ M\textsuperscript{2}DMT\cite{fan2021multi} \end{tabular}]
		{\includegraphics[width=0.115\linewidth]{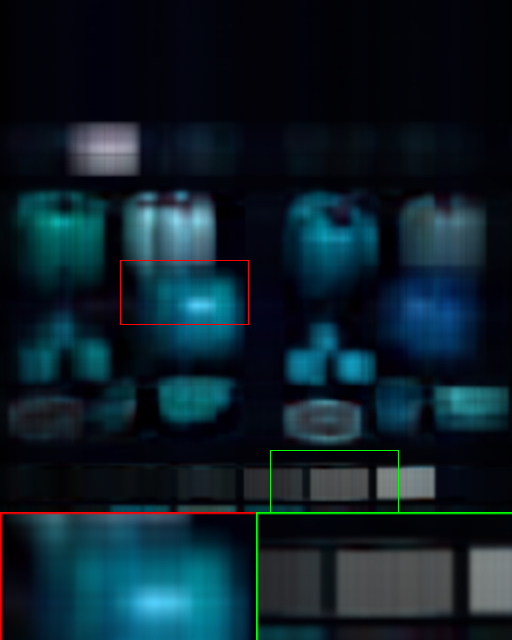}}
		\hspace{0.01cm}
		\subfloat[\begin{tabular}{c} PSNR 29.93\\ LRTC-ENR\cite{fan2023euclideannorminduced} \end{tabular}]
		{\includegraphics[width=0.115\linewidth]{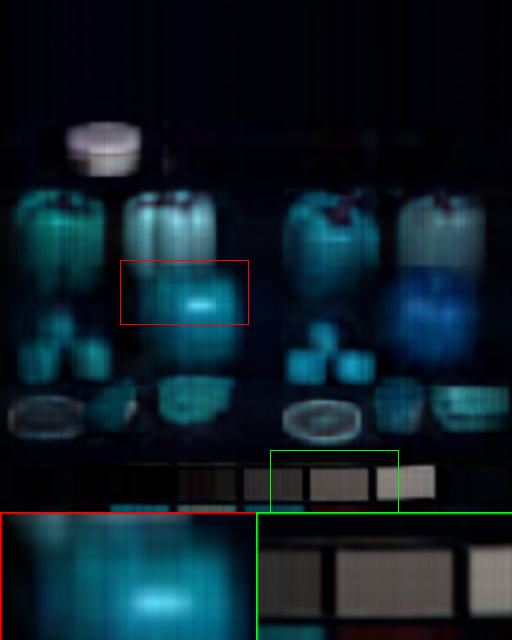}}
		\hspace{0.01cm}
		\subfloat[\begin{tabular}{c} PSNR 30.65\\ HLRTF\cite{luo2022hlrtf} \end{tabular}]
		{\includegraphics[width=0.115\linewidth]{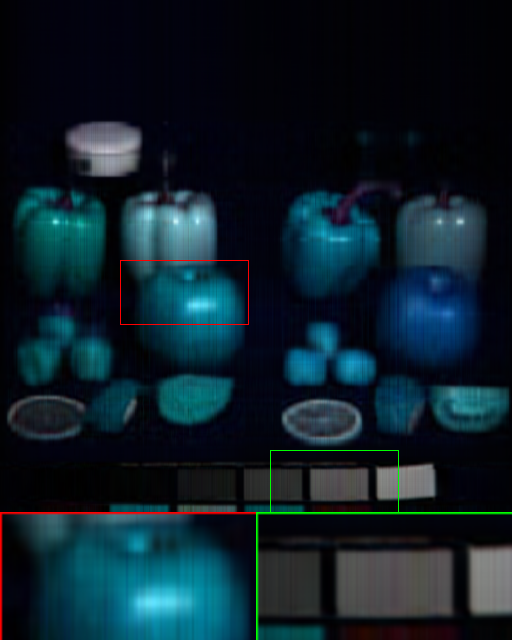}}
		\hspace{0.01cm}
		\subfloat[\begin{tabular}{c} PSNR 29.51\\ DeepTensor\cite{saragadam2024deeptensor} \end{tabular}]
		{\includegraphics[width=0.115\linewidth]{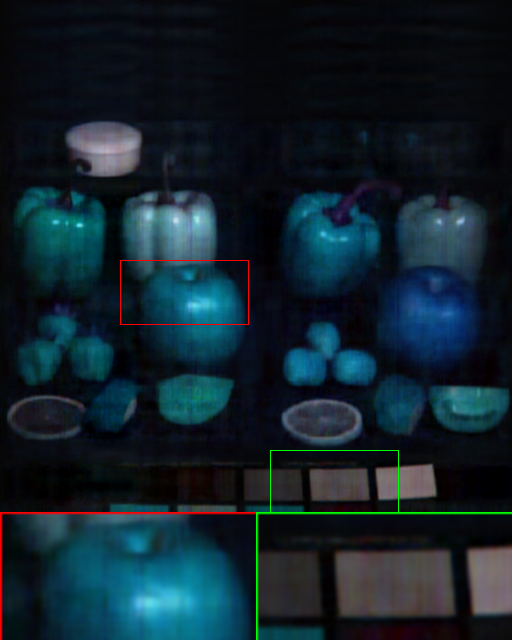}}
		\hspace{0.01cm}
		\subfloat[\begin{tabular}{c} PSNR 31.42\\ LRTFR\cite{luo2023low} \end{tabular}]
		{\includegraphics[width=0.115\linewidth]{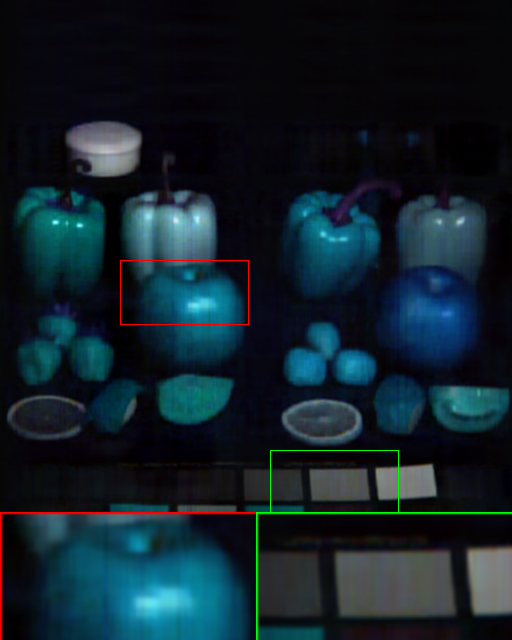}}
		\hspace{0.01cm}
		\subfloat[\begin{tabular}{c} PSNR 31.62\\ CP-Pruner \end{tabular}]
		{\includegraphics[width=0.115\linewidth]{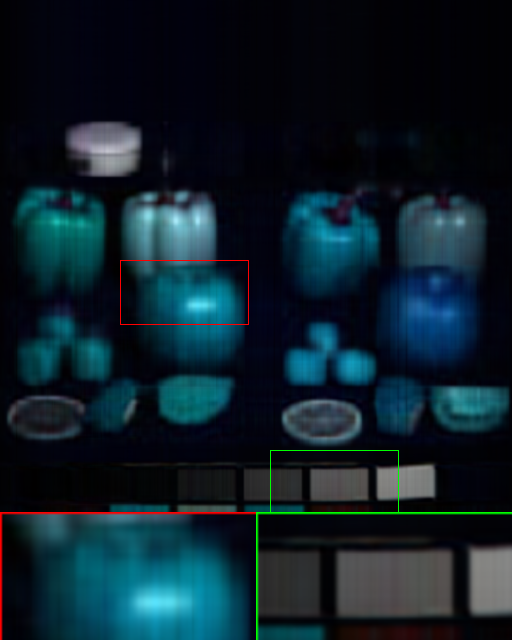}}
		\hspace{0.01cm}
		\subfloat[\begin{tabular}{c}PSNR Inf\\ Original \end{tabular}]
		{\includegraphics[width=0.115\linewidth]{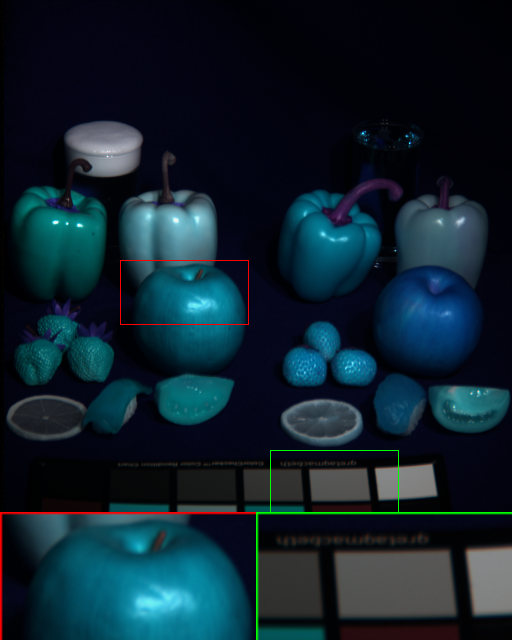}}
	\end{minipage}
	\caption{Results of multi-dimensional image denoising by different methods on HSIs \textit{WDC mall} (Case 1), \textit{PaviaU} (Case2), \textit{Beads} (Case 3), \textit{Balloons} (Case4) and \textit{Fruits} (Case5).}
	\label{fig:Demo4Denoising}
\end{figure*}
MSI denoising \cite{luo2023low, saragadam2024deeptensor, peng2022exact} recovers clean images from noisy observations on the original meshgrid; MSIs are often corrupted by mixed noise, including Gaussian, sparse, stripe noise, and dead lines. We tested four MSIs from the CAVE dataset, Pavia University, and Washington DC Mall hyperspectral images (HSIs)\footnote{\url{https://rslab.ut.ac.ir/data}} under five noise cases to evaluate algorithm robustness: Case 1 considers pure Gaussian noise with standard deviation 0.2; Case 2 combines Gaussian noise (Standard deviation=0.1) and sparse noise (Sparsity rate=0.1); Case 3 extends Case 2 by adding dead lines across all spectral bands \cite{liu2019wavelet, luo2023low}, a common real-world artifact; Case 4 extends Case 2 with 10\% stripe noise in 40\% of the bands; and Case 5 extends Case 3 with 10\% stripe noise in 40\% of the bands.

Based on Equ~\eqref{equ:Parameterization}, the optimization model of our method for multi-dimensional image denoising is formulated as
\begin{equation}
	\min_{\{\mathbf{W}^{(d)}_l\},\mathcal{S}}\|\mathcal{Y}-\mathcal{T}-\mathcal{S}\|_F^2+\lambda_S\|\mathcal{S}\|_{\ell_1}+\mathfrak{R}(\mathcal{T}),
\end{equation}
which without the commonly used TV loss and the $\mathcal{S}$ represents the sparse noise. We utilize the alternating minimization algorithm to tackle the denoising model. Specifically, we tackle the following subproblems in the $t$-th iteration:
\begin{equation}
	\begin{aligned}
		&\min_{\{\mathbf{W}^{(d)}_l\}}\|\mathcal{Y}-\mathcal{T}-\mathcal{S}^t\|_F^2+\mathfrak{R}(\mathcal{T}),\\
		&\min_{\mathcal{S}}\|\mathcal{Y}^t-\mathcal{T}^t-\mathcal{S}\|_F^2+\lambda_S\|\mathcal{S}\|_{\ell_1}.
	\end{aligned}
\end{equation}
In each iteration, we employ one step of the Adam algorithm to update the neural network weights $\{\mathbf{W}^{(d)}_l\}$. The $\mathcal{S}$ sub-problem can be exactly solved by soft-thresholding operator applied on each element of the input, i.e., $\mathcal{S}=\mathop{Soft}_{\lambda_S/2}(\mathcal{Y}^t-\mathcal{T}^t)$, where the $\mathop{Soft}_{\lambda_S/2}(\cdot)=\mathop{sgn}(\cdot)\max(|\cdot|-\frac{\lambda_S}{2}, 0)$.

We compare our method with state-of-the-art low-rank tensor-based approaches: M\textsuperscript{2}DMT \cite{fan2021multi}, LRTC-ENR \cite{fan2023euclideannorminduced} (solved via L-BFGS \cite{liu1989limited}), HLRTF \cite{luo2022hlrtf}, DeepTensor \cite{saragadam2024deeptensor}, and LRTFR \cite{luo2023low} (with additional TV loss). Results for MSI denoising are presented in Table~\ref{tab:compareDenoise} and Fig.~\ref{fig:Demo4Denoising}.  
Across the diverse noise cases tested, our method demonstrates superior performance and robustness in recovering high-quality MSIs under challenging conditions. Notably, it is the most stable among all tested algorithms, consistently performing well across different noise scenarios and datasets. Our method outperforms LRTFR \cite{luo2023low}, a Tucker-based implicit neural representation with additional TV loss, which validates the advantage of our proposed Jacobian-based smooth regularization.

\subsection{Point Cloud Upsampling}
\begin{table*}[t]
	\centering
	\caption{Quantitative results by different methods for point cloud upsampling.}
	\label{tab:comparePoint}
	\resizebox{1.0\linewidth}{!}{
		\begin{tabular}{ccccccccccccccccc}
			\toprule
			\multicolumn{1}{c}{Data} &\multicolumn{2}{c}{\textit{Table}} &\multicolumn{2}{c}{\textit{Airplane}} & \multicolumn{2}{c}{\textit{Chair}} &\multicolumn{2}{c}{\textit{Lamp}} &\multicolumn{2}{c}{\textit{Bunny}} &\multicolumn{2}{c}{\textit{Doughnut}} &\multicolumn{2}{c}{\textit{Sphere}} &\multicolumn{2}{c}{\textit{Heart}}\\
			\midrule
			Method &CD &F-Socre &CD &F-Socre &CD &F-Socre &CD &F-Socre &CD &F-Socre &CD &F-Socre &CD &F-Socre &CD &F-Socre\\
			\midrule
			\textit{Observed} 
			&0.0207 &0.3333 &0.0134 &0.3333 &0.0155 &0.3333 &0.0173 &0.3333 
			&0.0119 &0.0952 &0.0314 &0.0952 &0.0291 &0.0952 &0.0348 &0.0952\\
			SAPCU\cite{zhao2022self}
			&0.0250 &0.8695 &0.0173 &0.9029 &0.0194 &0.9347 &0.0192 &0.9138 
			&0.0227 &0.7273 &0.0686 &0.9571 &0.0278 &0.9075 &0.3010 &0.9429\\
			NeuralTPS\cite{chen2023unsupervised}
			&0.0252 &0.9051 &0.0146 &0.9188 &0.0238 &0.9045 &0.0210 &0.9475 
			&0.0020 &0.7700 &0.0462 &0.9658 &\underline{0.0209} &0.9125 &0.2163 &0.9596\\
			NeuralPoints\cite{feng2022neural}
			&0.0172 &0.9468 &0.0147 &0.9348 &0.0230 &0.8888 &0.0245 &0.8883 
			&0.0020 &0.7951 &0.0485 &0.9988 &\textbf{0.0207} &0.9367 &0.2380 &0.9571\\
			Grad-PU\cite{he2023grad}
			&0.0202 &0.9634 &\underline{0.0126} &0.9435 &0.0174 &0.9663 &0.0167 &0.9696 
			&0.0019 &0.8301 &0.0460 &0.9997 &\underline{0.0209} &0.9708 &0.2503 &0.9736\\
			LRTFR\cite{luo2023low} 
			&\underline{0.0146} &\underline{0.9858} &\underline{0.0126} &\underline{0.9482} &\underline{0.0137} &\underline{0.9805} &\underline{0.0164} &\underline{0.9860} 
			&\underline{0.0017} &\underline{0.8529} &\underline{0.0442} &\textbf{1.0000} &0.0264 &\textbf{0.9851} &\underline{0.1217} &\underline{0.9948}\\
			\textbf{CP-Pruner} 
			&\textbf{0.0113} &\textbf{0.9961} &\textbf{0.0116} &\textbf{0.9566} &\textbf{0.0112} &\textbf{0.9867} &\textbf{0.0131} &\textbf{0.9913} 
			&\textbf{0.0014} &\textbf{0.9164} &\textbf{0.0371} &\textbf{1.0000} &0.0237 &\textbf{0.9851} &\textbf{0.0985} &\textbf{0.9986}\\
			\bottomrule
	\end{tabular}}
	\label{tab:comparOfPointCloud}
\end{table*}
\begin{figure*}[t]
	\centering
	\begin{minipage}{1.\linewidth}
		\centering
		\subfloat[F1 0.3333]
		{\includegraphics[width=0.115\linewidth]{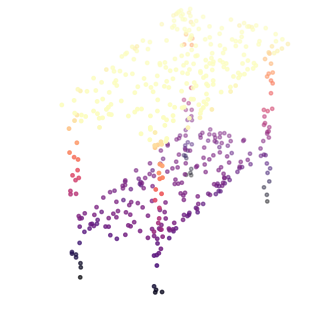}}
		\hspace{0.01cm}
		\subfloat[F1 0.8695]
		{\includegraphics[width=0.115\linewidth]{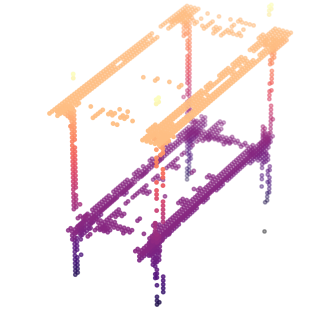}}
		\hspace{0.01cm}
		\subfloat[F1 0.9051]
		{\includegraphics[width=0.115\linewidth]{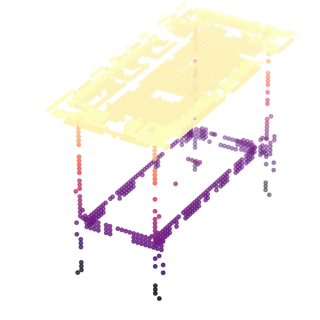}}
		\hspace{0.01cm}
		\subfloat[F1 0.9468]
		{\includegraphics[width=0.115\linewidth]{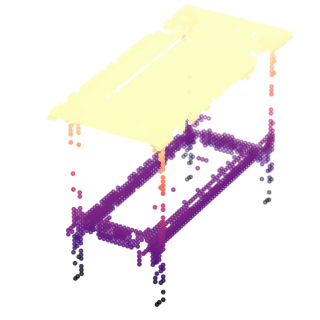}}
		\hspace{0.01cm}
		\subfloat[F1 0.9634]
		{\includegraphics[width=0.115\linewidth]{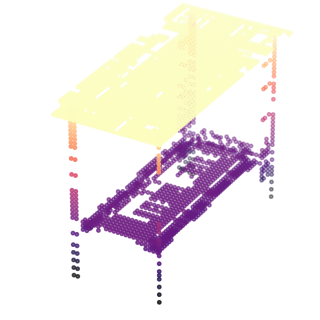}}
		\hspace{0.01cm}
		\subfloat[F1 0.9858]
		{\includegraphics[width=0.115\linewidth]{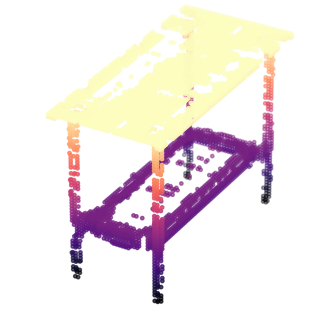}}
		\hspace{0.01cm}
		\subfloat[F1 0.9961]
		{\includegraphics[width=0.115\linewidth]{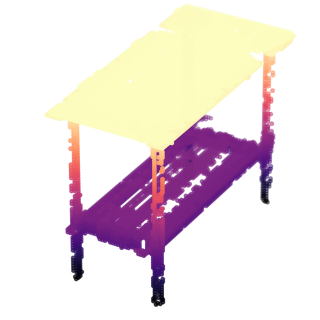}}
		\hspace{0.01cm}
		\subfloat[F1 1.0000]
		{\includegraphics[width=0.115\linewidth]{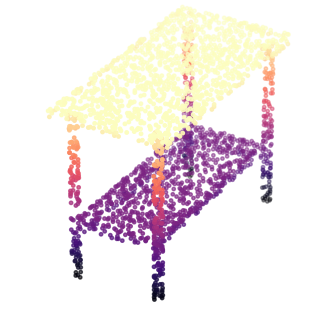}}
	\end{minipage}
	\vskip -0.3cm 
	\begin{minipage}{1.\linewidth}
		\centering
		\subfloat[F1 0.3333]
		{\includegraphics[width=0.115\linewidth]{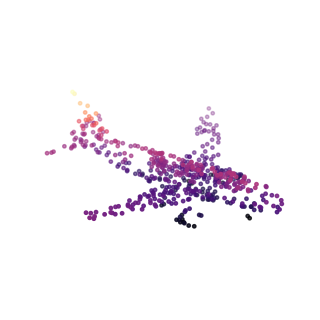}}
		\hspace{0.01cm}
		\subfloat[F1 0.9029]
		{\includegraphics[width=0.115\linewidth]{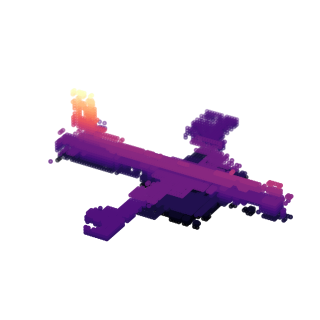}}
		\hspace{0.01cm}
		\subfloat[F1 0.9188]
		{\includegraphics[width=0.115\linewidth]{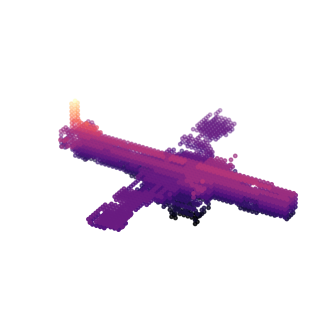}}
		\hspace{0.01cm}
		\subfloat[F1 0.9348]
		{\includegraphics[width=0.115\linewidth]{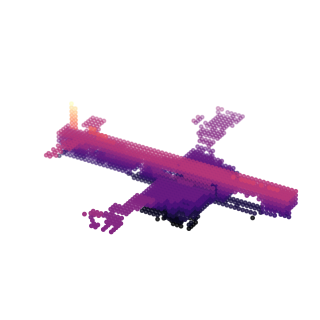}}
		\hspace{0.01cm}
		\subfloat[F1 0.9435]
		{\includegraphics[width=0.115\linewidth]{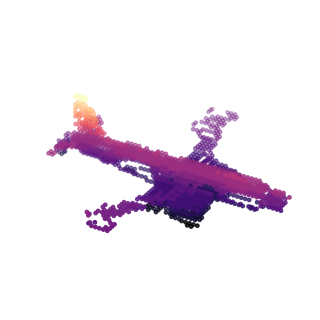}}
		\hspace{0.01cm}
		\subfloat[F1 0.9482]
		{\includegraphics[width=0.115\linewidth]{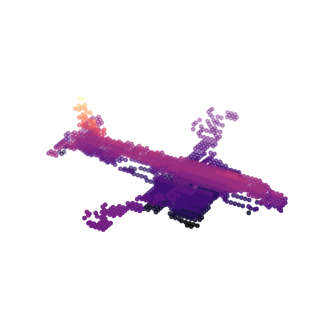}}
		\hspace{0.01cm}
		\subfloat[F1 0.9566]
		{\includegraphics[width=0.115\linewidth]{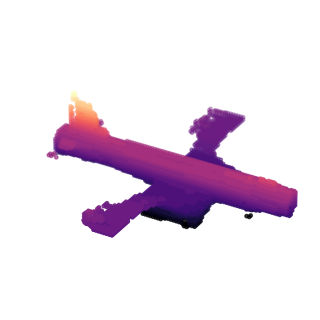}}
		\hspace{0.01cm}
		\subfloat[F1 1.0000]
		{\includegraphics[width=0.115\linewidth]{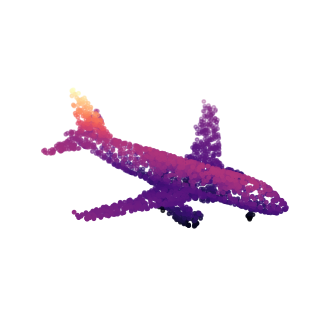}}
	\end{minipage}
	\vskip -0.3cm 
	\begin{minipage}{1.\linewidth}
		\centering
		\subfloat[F1 0.3333]
		{\includegraphics[width=0.115\linewidth]{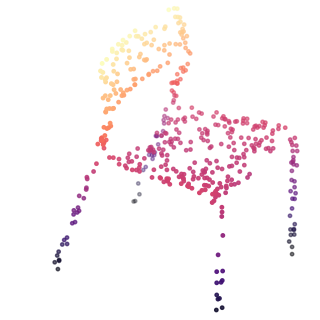}}
		\hspace{0.01cm}
		\subfloat[F1 0.9347]
		{\includegraphics[width=0.115\linewidth]{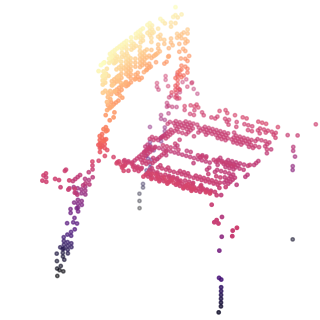}}
		\hspace{0.01cm}
		\subfloat[F1 0.9045]
		{\includegraphics[width=0.115\linewidth]{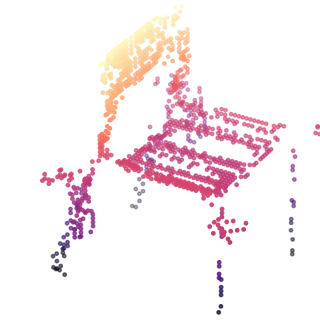}}
		\hspace{0.01cm}
		\subfloat[F1 0.8888]
		{\includegraphics[width=0.115\linewidth]{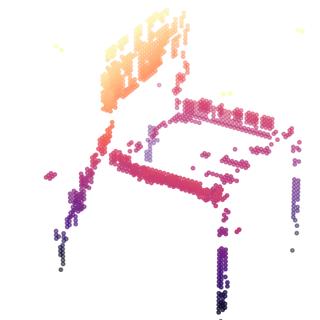}}
		\hspace{0.01cm}
		\subfloat[F1 0.9663]
		{\includegraphics[width=0.115\linewidth]{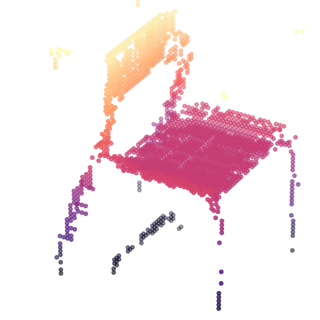}}
		\hspace{0.01cm}
		\subfloat[F1 0.9805]
		{\includegraphics[width=0.115\linewidth]{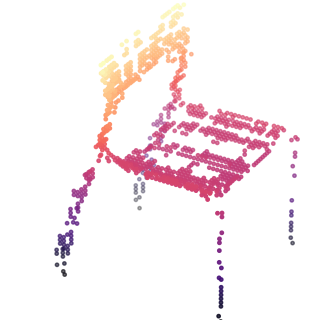}}
		\hspace{0.01cm}
		\subfloat[F1 0.9867]
		{\includegraphics[width=0.115\linewidth]{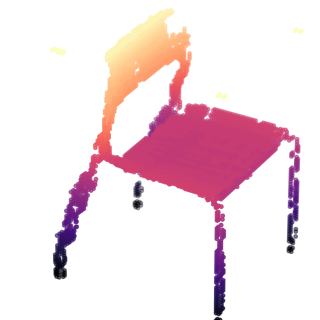}}
		\hspace{0.01cm}
		\subfloat[F1 1.0000]
		{\includegraphics[width=0.115\linewidth]{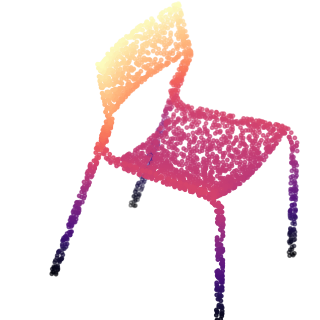}}
	\end{minipage}
	\vskip -0.3cm 
	\begin{minipage}{1.\linewidth}
		\centering
		\subfloat[\begin{tabular}{c} F1 0.3333\\ Observed\end{tabular}]
		{\includegraphics[width=0.115\linewidth]{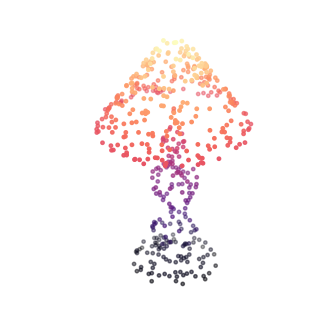}}
		\hspace{0.01cm}
		\subfloat[\begin{tabular}{c} F1 0.9138\\ SAPCU\cite{zhao2022self}\end{tabular}]
		{\includegraphics[width=0.115\linewidth]{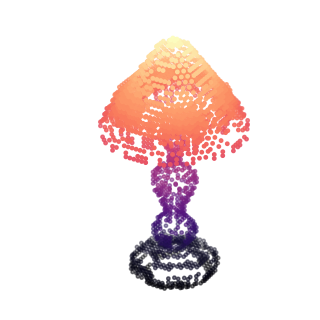}}
		\hspace{0.01cm}
		\subfloat[\begin{tabular}{c}F1 0.9475\\ NeuralTPS\cite{chen2023unsupervised}\end{tabular}]
		{\includegraphics[width=0.115\linewidth]{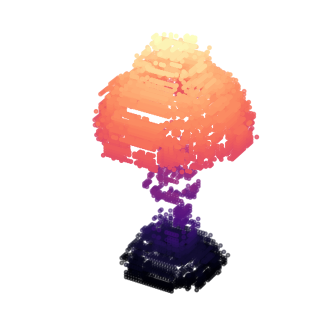}}
		\hspace{0.01cm}
		\subfloat[\begin{tabular}{c}F1 0.8883\\ NeuralPoints\cite{feng2022neural}\end{tabular}]
		{\includegraphics[width=0.115\linewidth]{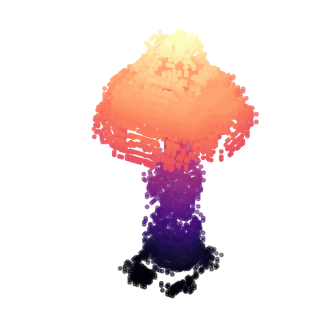}}
		\hspace{0.01cm}
		\subfloat[\begin{tabular}{c}F1 0.9696\\ Grad-PU\cite{he2023grad}\end{tabular}]
		{\includegraphics[width=0.115\linewidth]{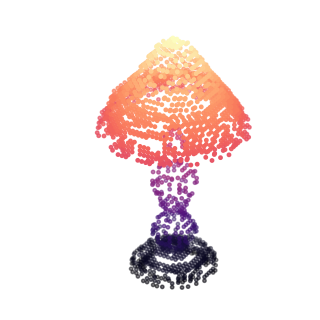}}
		\hspace{0.01cm}
		\subfloat[\begin{tabular}{c} F1 0.9860\\ LRTFR\cite{luo2023low}\end{tabular}]
		{\includegraphics[width=0.115\linewidth]{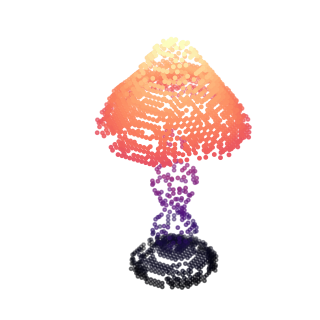}}
		\hspace{0.01cm}
		\subfloat[\begin{tabular}{c} F1 0.9913\\ CP-Pruner\end{tabular}]
		{\includegraphics[width=0.115\linewidth]{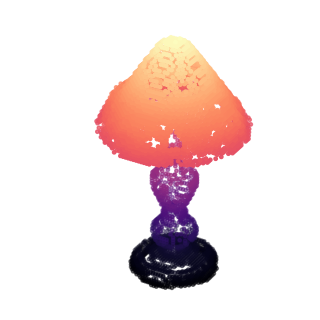}}
		\hspace{0.01cm}
		\subfloat[\begin{tabular}{c} F1 1.0000\\ Original\end{tabular}]
		{\includegraphics[width=0.115\linewidth]{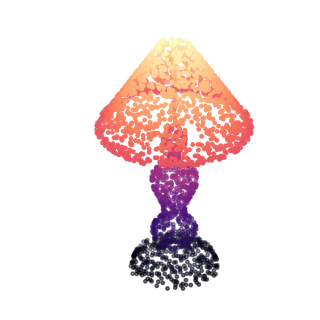}}
	\end{minipage}
	\caption{Results of point cloud upsampling by different methods on \textit{Table}, \textit{Airplane}, \textit{Chair}, and \textit{Lamp} in the ShapeNet dataset\cite{chang2015shapenet}. The number of observed points is 20\% of the original points, with fewer than 500 points in total.}
	\label{fig:Demo4UpsamplingOnShapeNet}
\end{figure*}
To demonstrate our method’s effectiveness beyond structured meshgrid data, we address the point cloud upsampling problem \cite{yu2018pu, li2019pu}. Most traditional low-rank tensor-based methods are ill-suited for this task, as they cannot represent beyond-meshgrid point clouds—by contrast, our method excels here because it learns a continuous data representation.
Given an observed sparse point cloud $\Omega = \{\mathbf{v}_i \in \mathbb{R}^3\}_{i=1}^{n}$ (where $n$ is the number of points), we use the signed distance function (SDF) \cite{park2019deepsdf} to model this continuous representation. The training loss function for the SDF is defined as:
\begin{equation}
	\label{equ:PCupsampling}
	\begin{aligned}
		\mathop{\min}_{\{\mathbf{W}^{(d)}_l\}}&\sum_{\mathbf{v}_i\in\Omega}|s(\mathbf{v}_i)|+\lambda_1\int_{\mathbb{R}^3}|\Vert\frac{\partial s(\mathbf{v}_i)}{\partial \mathbf{v}_i}\Vert^2_F-1|\dif \mathbf{v}_i\\
		&+\lambda_2\int_{\mathbb{R}^3\backslash\Omega}\exp(-|s(\mathbf{v}_i)|)\dif \mathbf{v}_i + \mathfrak{R}(\mathcal{T}).
	\end{aligned}
\end{equation}
Here, $s(\cdot): \mathbb{R}^3 \to \mathbb{R}$ denotes the SDF to be learned, and $\lambda_1$, $\lambda_2$ are trade-off parameters that balance the loss terms: The first term enforces the SDF to be zero at observed points; The second term ensures the SDF’s gradient magnitude is 1 everywhere, promoting a smooth surface; The third term encourages SDF values outside $\Omega$ to be far from zero, helping define the shape boundary. In practice, we approximate the integrals by randomly sampling a large number of spatial points.

The surface defined by $s(\mathbf{v}) = 0$ represents the underlying shape of the point cloud. For upsampling, we use evenly spaced sampling to generate dense points $\mathbf{v}$ where $|s(\mathbf{v})| < \tau_{\text{thr}}$, with $\tau_{\text{thr}}$ as a predefined threshold, these points form the desired high-resolution point cloud.
This approach leverages the learned SDF’s continuity to upsample point clouds effectively while preserving the original shape’s structural integrity. We normalize the point cloud coordinates and set $\tau_{\text{thr}} = 0.05$, a value chosen to ensure the recovered dense point cloud contains at least $10^5$ points.

We conducted experiments on multiple datasets: Table, Airplane, Chair, and Lamp from the ShapeNet benchmark \cite{chang2015shapenet}, the Stanford Bunny\footnote{\url{https://graphics.stanford.edu/data/3Dscanrep/}}, and three hand-crafted shapes (Doughnut, Sphere, Heart). For each dataset, we downsampled the original point cloud to an observed sparse set (fewer than 500 points) using random sampling.
We compared our method with five deep learning-based baselines: SAPCU \cite{zhao2022self}, NeuralTPS \cite{chen2023unsupervised}, NeuralPoints \cite{feng2022neural}, Grad-PU \cite{he2023grad}, and LRTFR \cite{luo2023low}. Results for point cloud upsampling are shown in Table \ref{tab:comparePoint} and Fig.~\ref{fig:Demo4UpsamplingOnShapeNet}, which demonstrate that our method consistently generates denser point clouds, achieves significant improvements in both quantitative metrics and qualitative evaluations, and exhibits strong generalization across diverse datasets. Additional visualizations are provided in the supplementary material.
This superior performance stems from three key factors: first, our unsupervised approach requires no training dataset, ensuring greater versatility; second, the low-rank regularization in Eq.~\eqref{equ:SumofSpNorm} is explicitly designed for sparse CP decomposition, boosting representation efficiency; third, the Jacobian-based smoothness regularization in Eq.~\eqref{equ:SmoothnessReg} is naturally compatible with continuous data, enabling better generalization across datasets.

\section{DISCUSSIONS}
\begin{figure*}[t]
	\centering
	\includegraphics[width=0.19\linewidth]{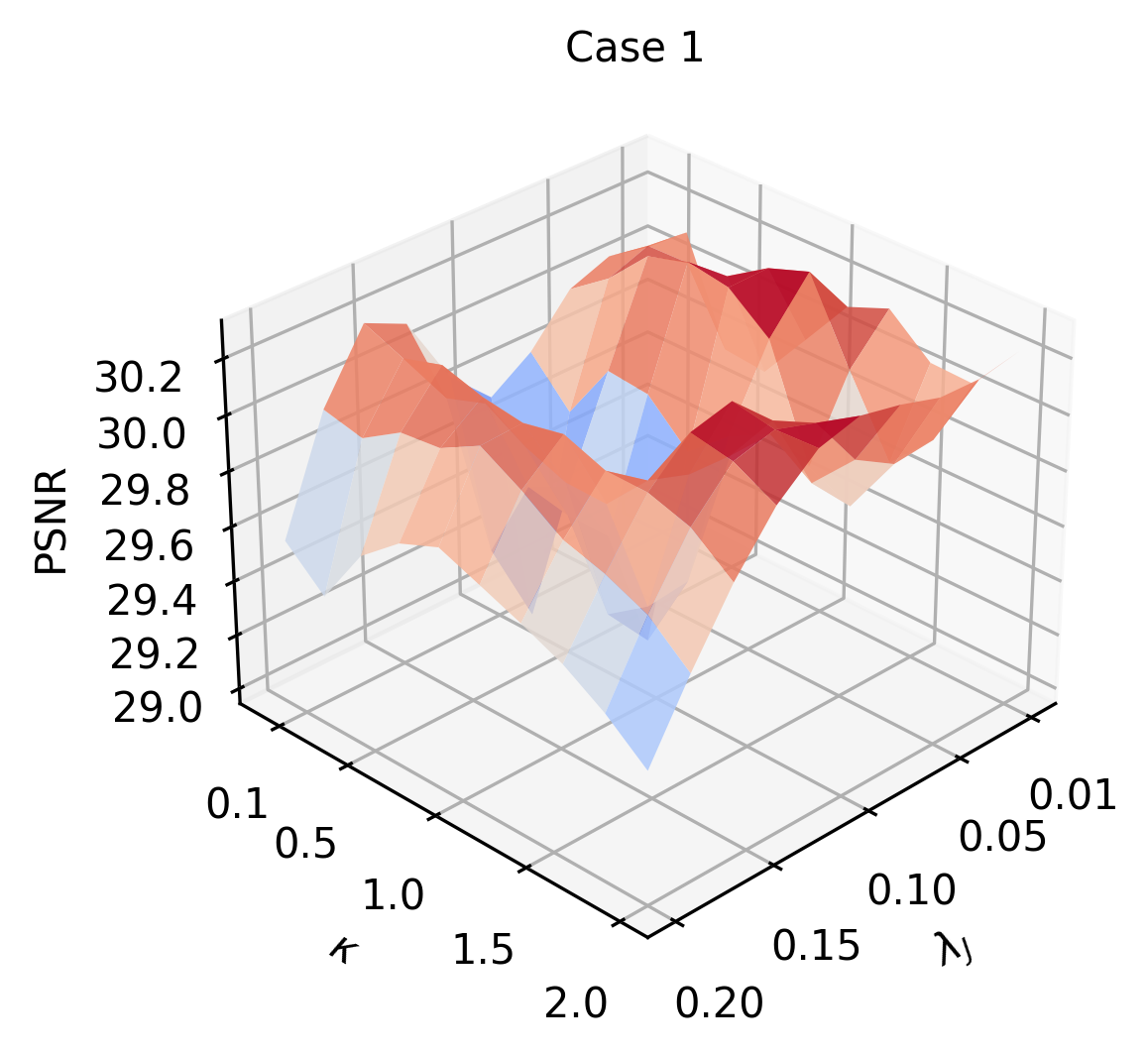}
	\includegraphics[width=0.19\linewidth]{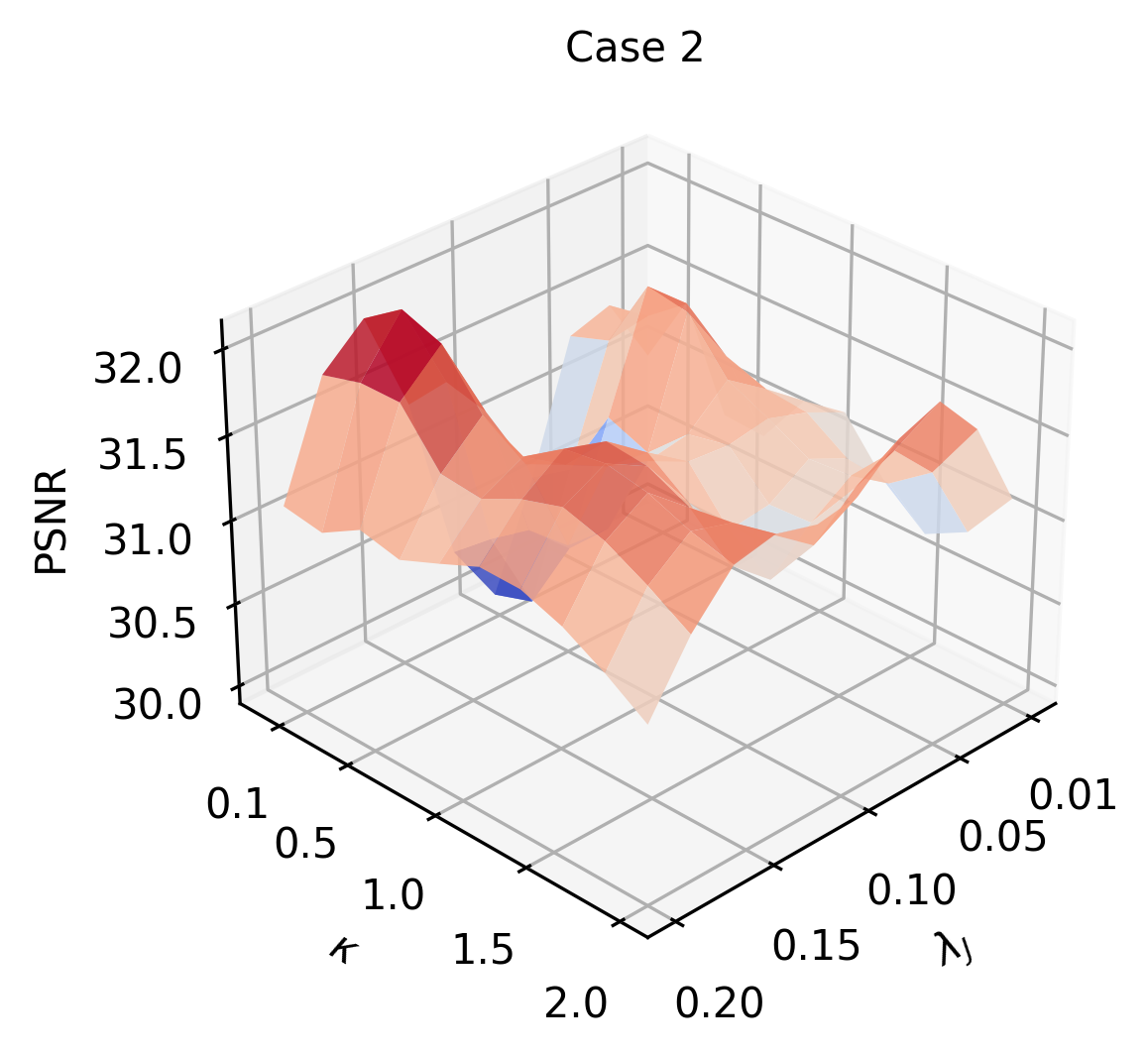}
	\includegraphics[width=0.19\linewidth]{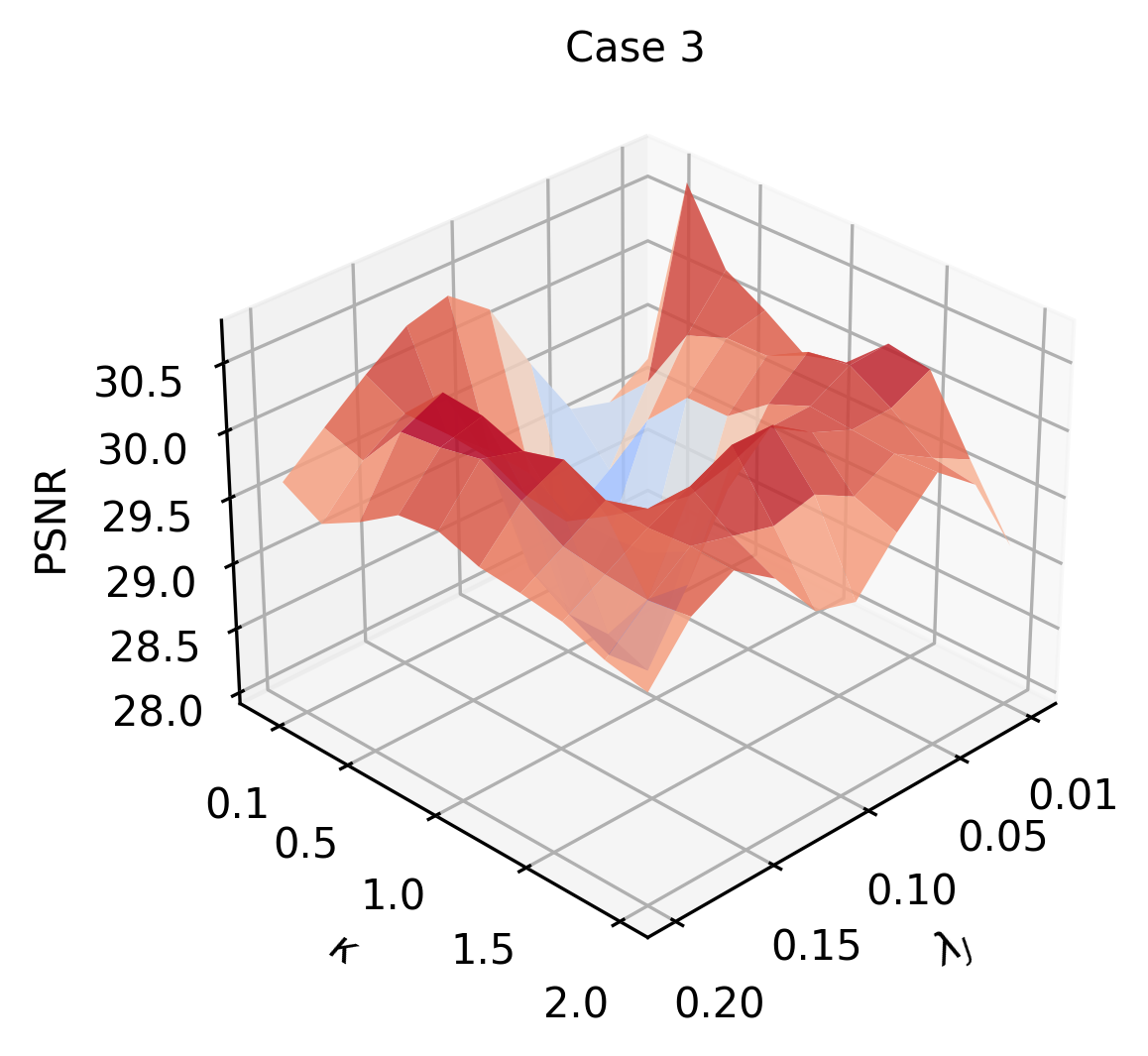}
	\includegraphics[width=0.19\linewidth]{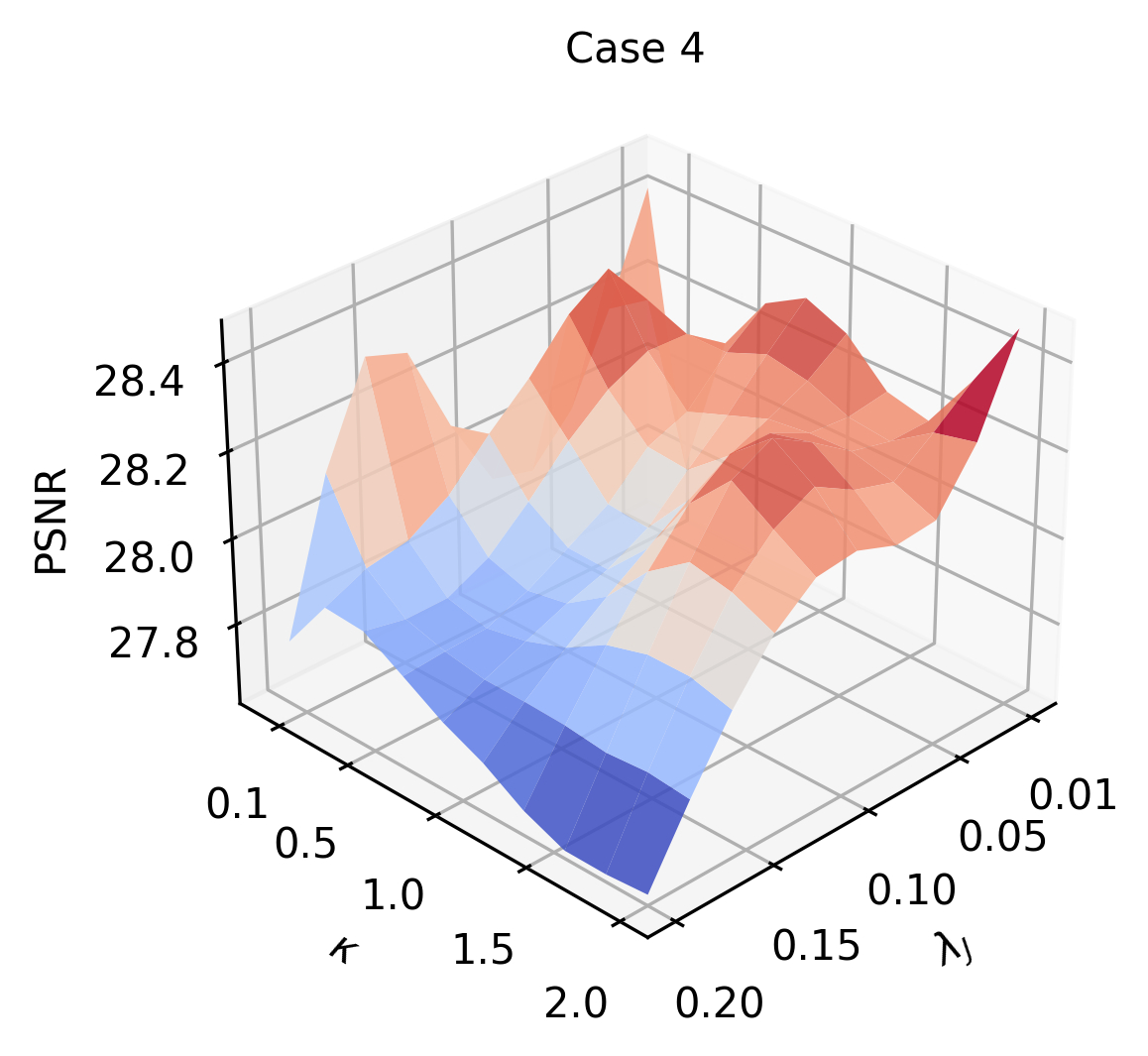}
	\includegraphics[width=0.19\linewidth]{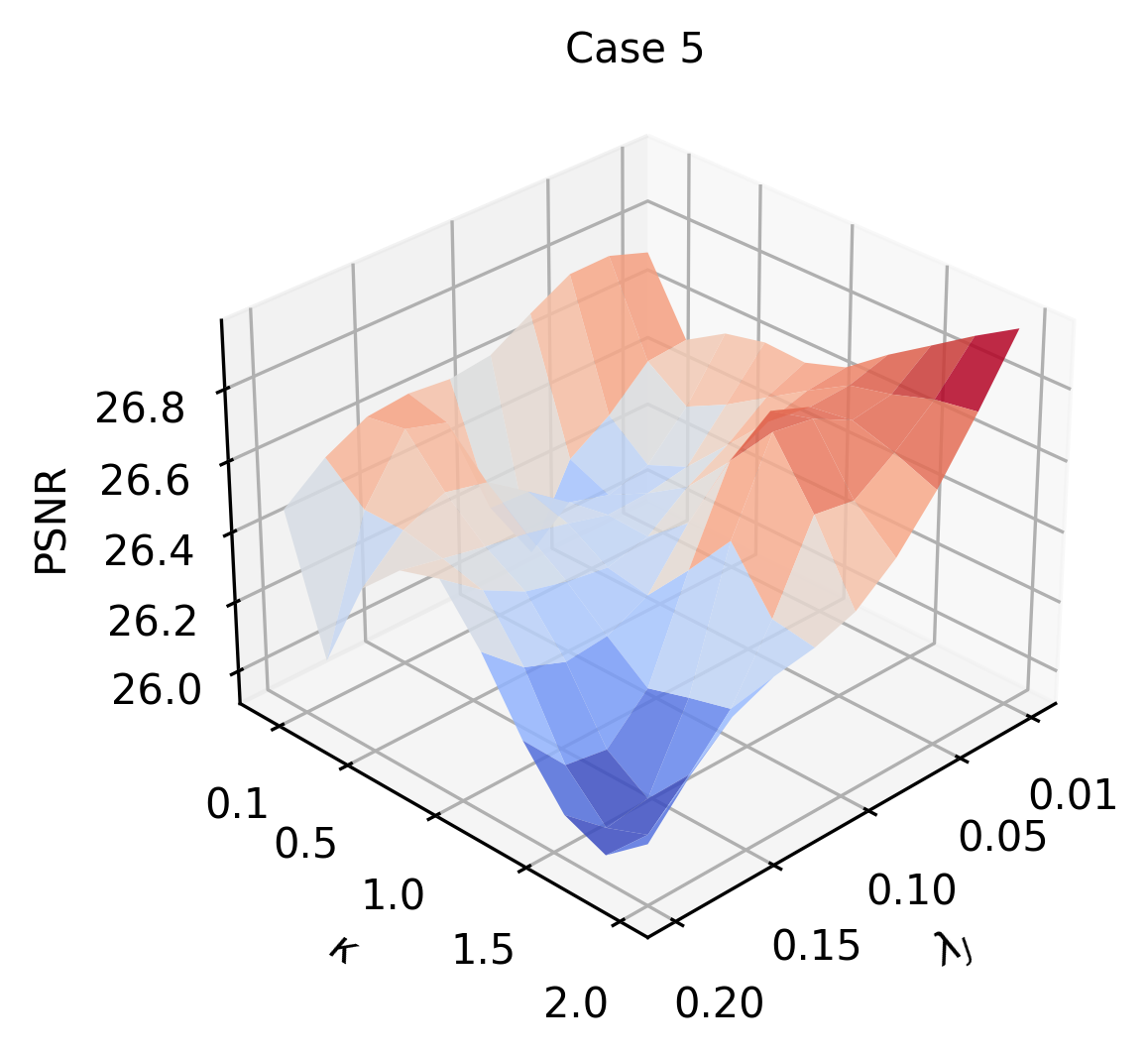}
	\caption{Sensitivity of $\kappa$ and $\lambda_{J}$ on \textit{PaviaU} denoising task.}
	\label{fig:kappaAblation}
\end{figure*}
\subsection{Influences of Schatten-p quasi-norm}
\begin{figure}[t]
	\centering
	\begin{minipage}{1.\linewidth}
		\centering
		\subfloat[]
		{\includegraphics[width=0.5\linewidth]{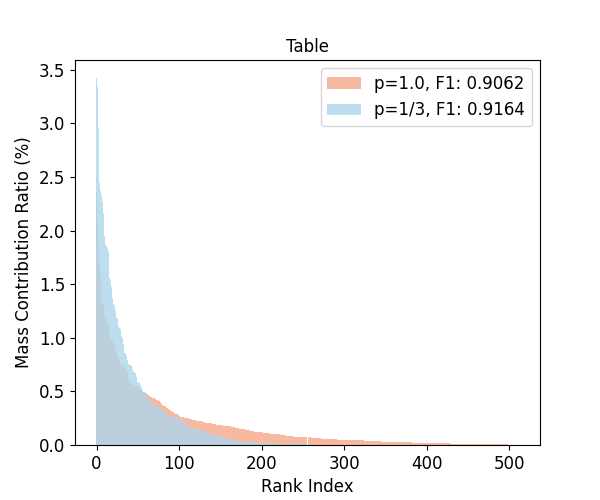}}
		\subfloat[]
		{\includegraphics[width=0.5\linewidth]{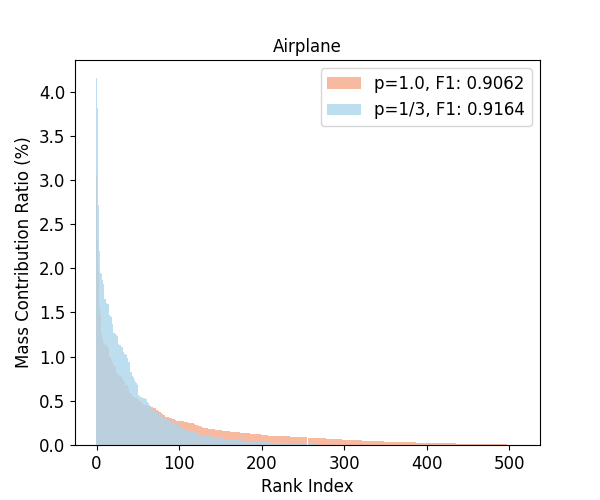}}
	\end{minipage}
	\vskip -0.8cm 
	\begin{minipage}{1.\linewidth}
		\centering
		\subfloat[]
		{\includegraphics[width=0.5\linewidth]{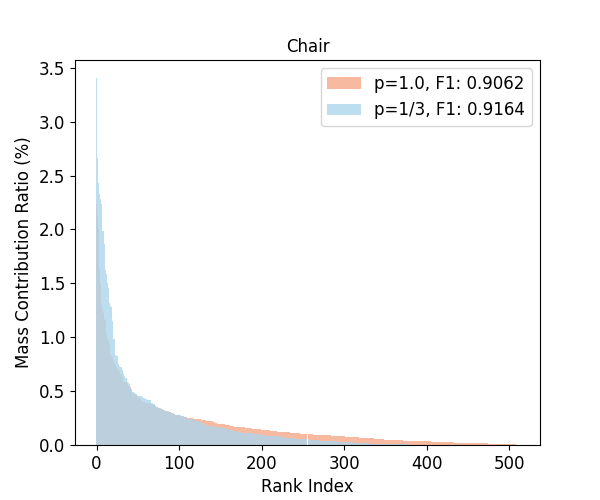}}
		\subfloat[]
		{\includegraphics[width=0.5\linewidth]{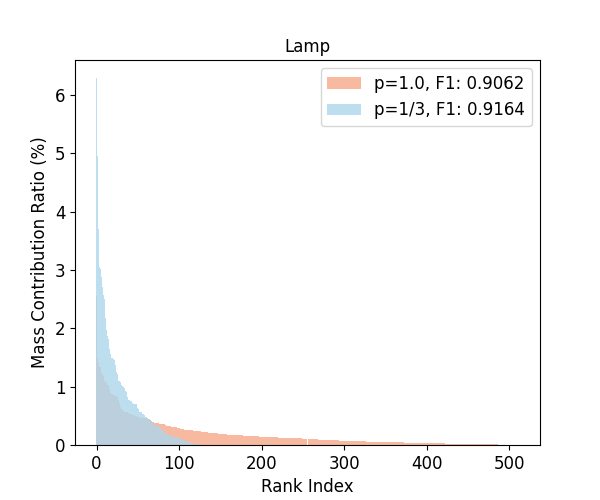}}
	\end{minipage}
	\caption{Comparison of the percentage of total mass accounted for by a single Rank-1 Tensor in the CP decomposition of upsampled point clouds, across different values of $p$}
	\label{fig:CPWeights}
\end{figure}
The variational Schatten-$p$ quasi-norm is a key regularization term in our method. For $p<1$, the quasi-norm promotes sparsity more aggressively than the nuclear norm, which explains the improved performance. As illustrated in Fig.~\ref{fig:CPWeights}, the CP weights from upsampled points are clearly sparse—only a few components carry most of the weight—demonstrating that when the CP-rank $R$ is overspecified, the regularization automatically prunes redundant components.
We further evaluated the effect of different $p$ values across three low-rank recovery tasks, as shown in Fig.~\ref{fig:ablation4pR}, performance varies significantly with $p$, and $p<1$ consistently outperforms $p=1$. This property is particularly beneficial when the underlying data has an approximately low-CP-rank structure dominated by a few components.

Notably, the non-convexity of the variational Schatten-$p$ quasi-norm is long viewed as a challenge in traditional optimization, but in our model, it can be effectively addressed by deep learning optimizers. Additionally, adjusting $p$ allows flexible and automatic control of sparsity, which explains why our proposed method achieves strong performance.
From these results, we recommend using smaller values of $p$ ($p \leq 10^{-1}$) to attain optimal performance across the tested tasks. This further validates that the variational Schatten-$p$ quasi-norm effectively promotes low-CP-rank structures.

\subsection{Influences of the choice of predefined rank $R$}
\begin{figure}[t]
	\centering
	\begin{minipage}{1.\linewidth}
		\centering
		\subfloat[(a) $p$ in inpainting]
		{\includegraphics[width=0.49\linewidth]{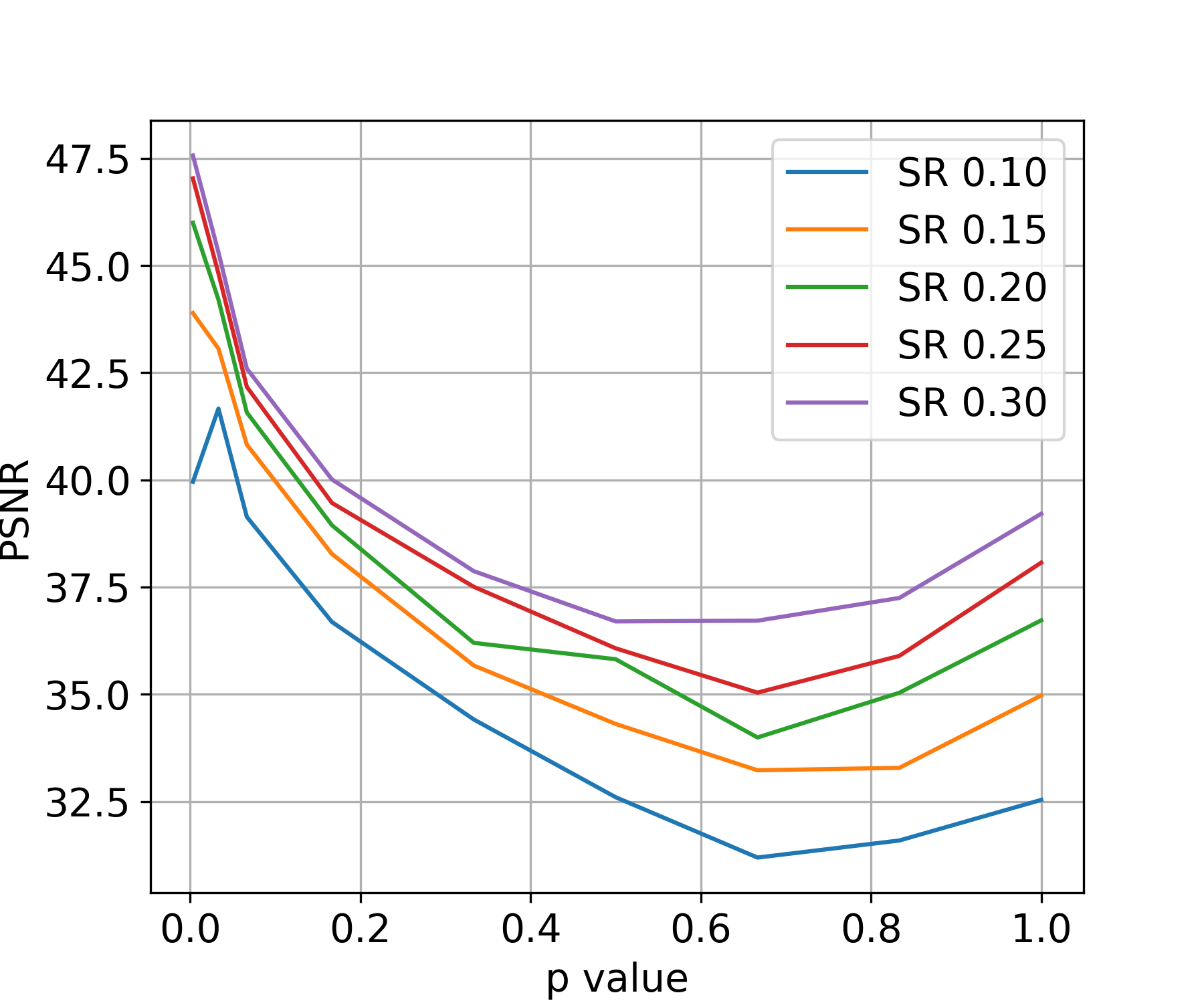}}
		\subfloat[(b) $R$ in inpainting]
		{\includegraphics[width=0.49\linewidth]{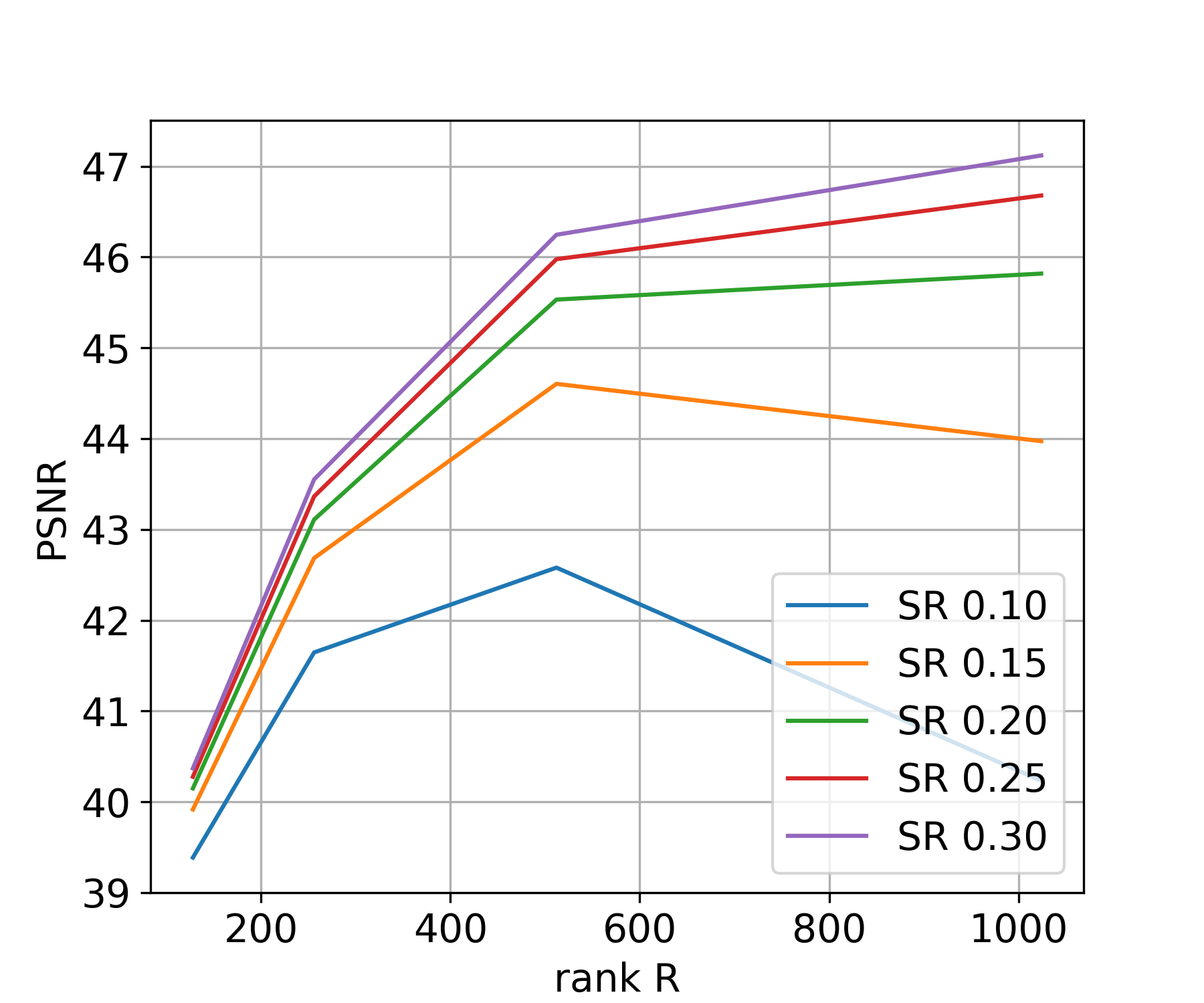}}
	\end{minipage}
	\vskip -0.3cm 
	\begin{minipage}{1.\linewidth}
		\centering
		\subfloat[(c) $p$ in denoising]
		{\includegraphics[width=0.49\linewidth]{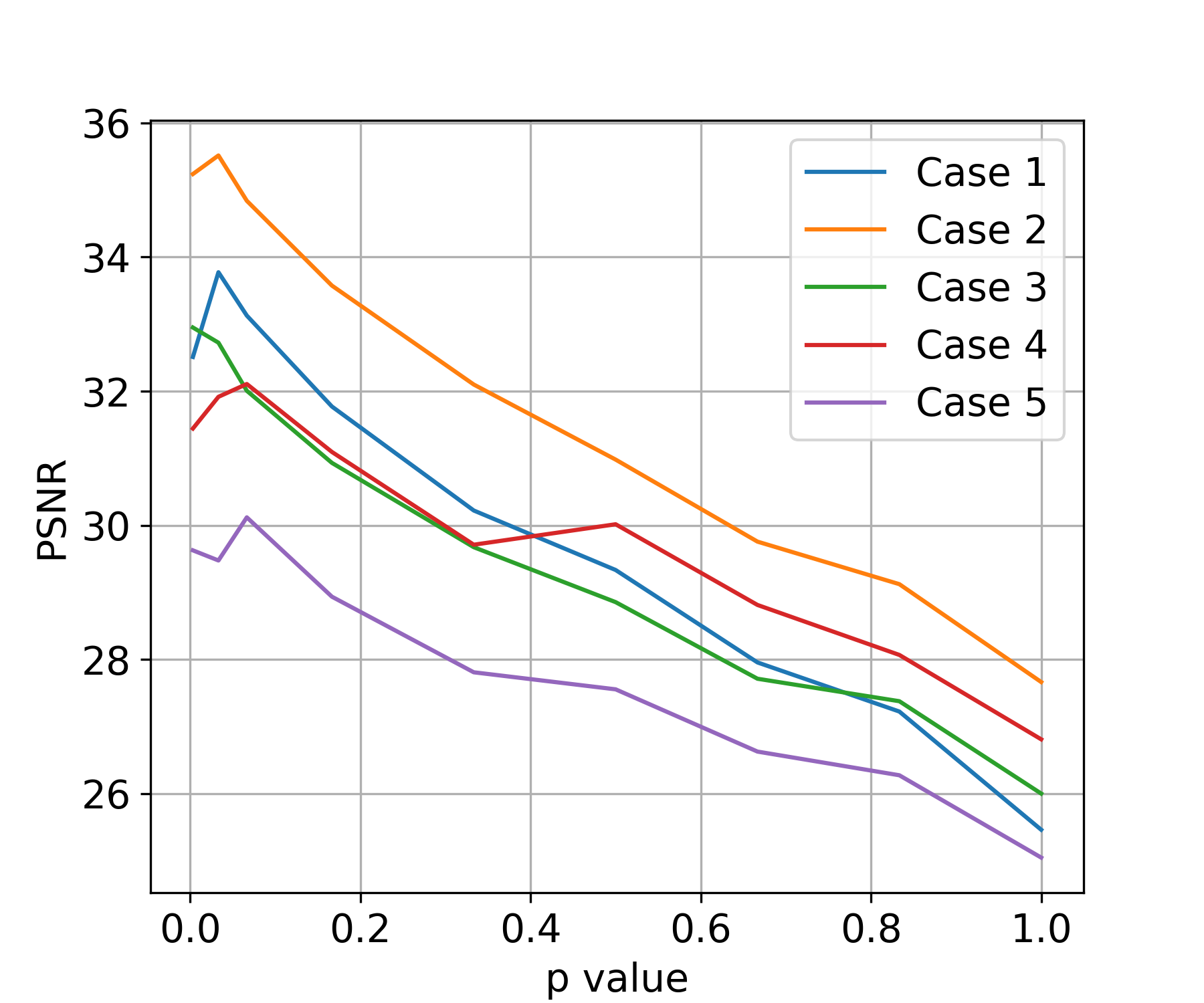}}
		\subfloat[(d) $R$ in denoising]
		{\includegraphics[width=0.49\linewidth]{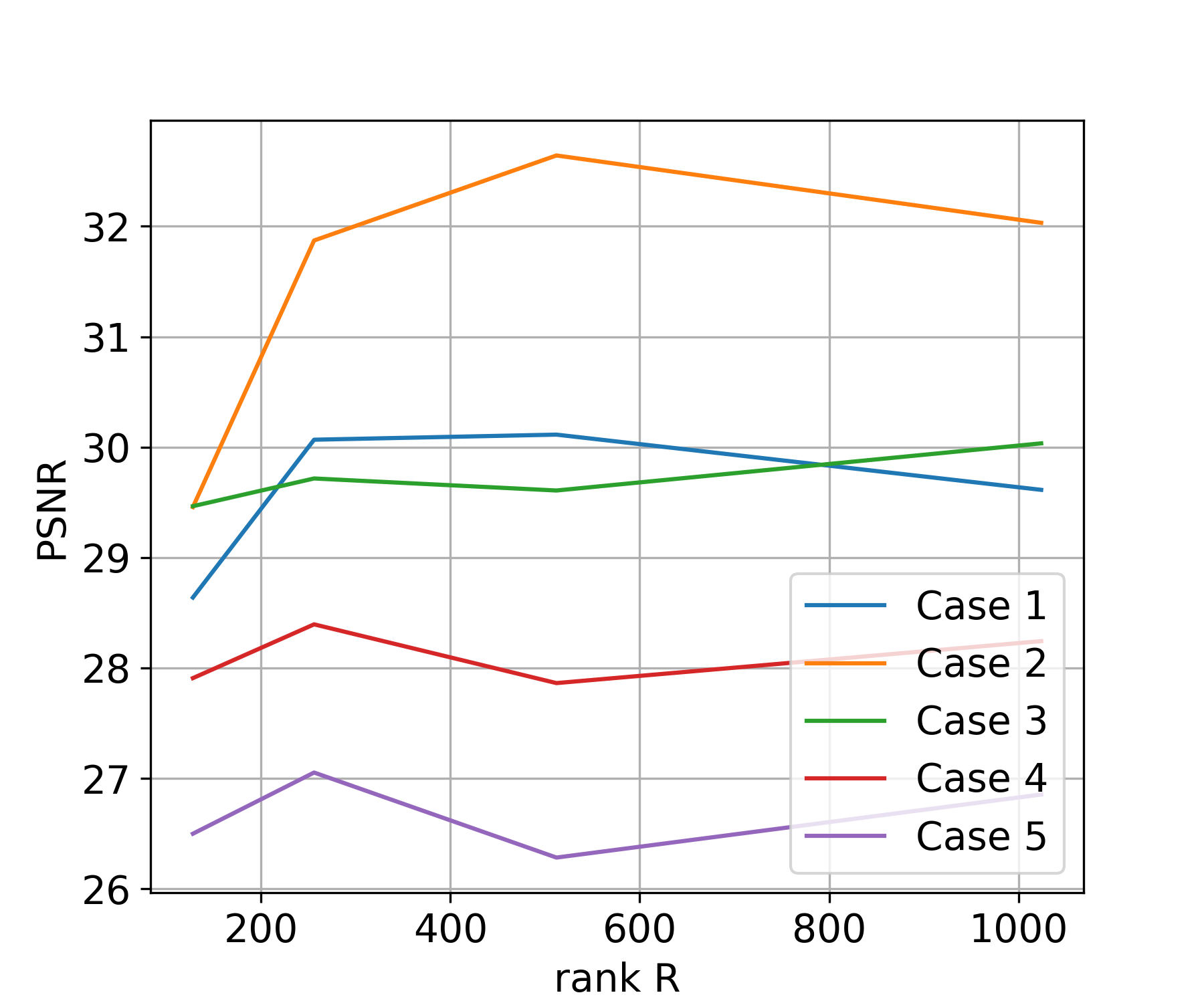}}
	\end{minipage}
	\vskip -0.3cm 
	\begin{minipage}{1.\linewidth}
		\centering
		\subfloat[(e) $p$ in upsampling]
		{\includegraphics[width=0.49\linewidth]{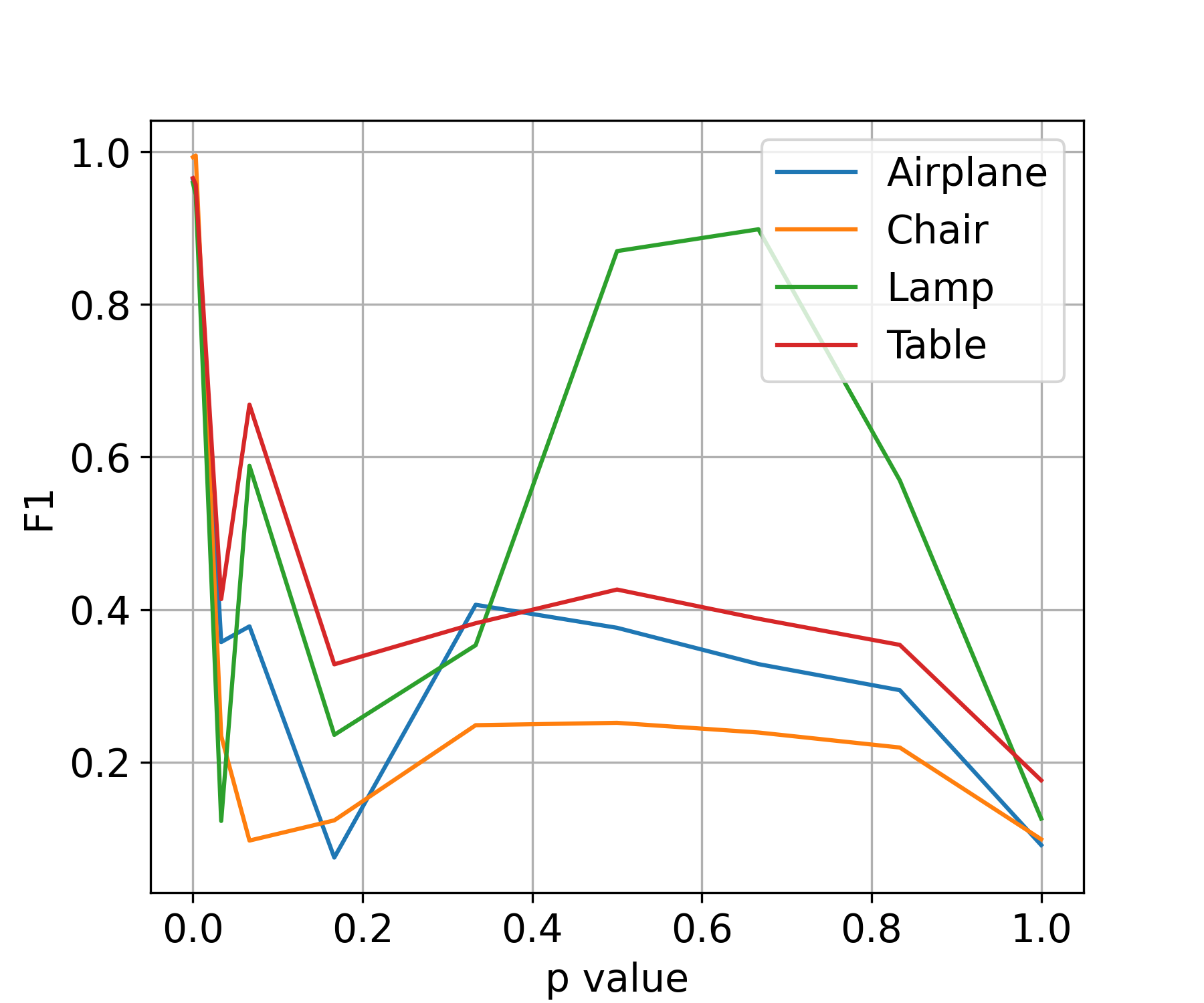}}
		\subfloat[(f) $R$ in upsampling]
		{\includegraphics[width=0.49\linewidth]{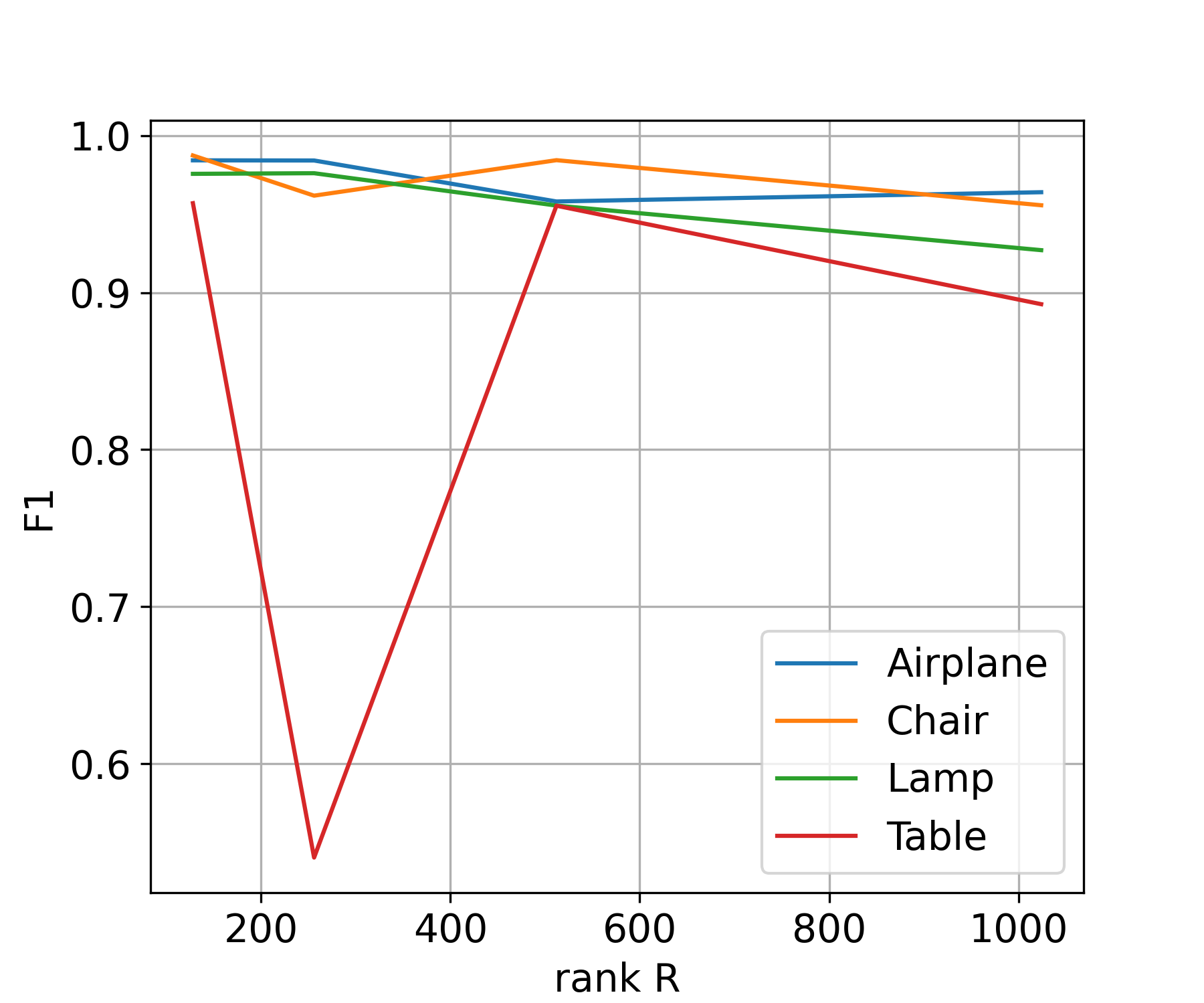}}
	\end{minipage}
	\caption{Ablation study of $p$ value in variational Schatten-p quasi-norm and the predefined CP rank $R$. The inpainting, denoising and upsampling experiment on MSIs, \textit{PaviaU} and ShapeNet.}
	\label{fig:ablation4pR}
\end{figure}
Selecting an appropriate rank upper bound $R$ is critical across all matrix and tensor decomposition methods. Therefore, a very important issue is whether a tensor recovery method can work on all predefined values of $R$ without performance collapse.
The robustness test of our method with respect to $R$ is shown in Fig.~\ref{fig:ablation4pR}. Our method is not highly sensitive to $R$ across tasks. It is also relatively robust to this hyperparameter and achieves satisfactory performance over a wide range of values in most cases.
Fig.~\ref{fig:ablation4pR} also shows that a larger $R$ provides more stable performance guarantees. However, it also increases the number of network parameters. Thus, a higher rank can enhance stability, but it is essential to balance stability with computational efficiency.

\subsection{Influences of Smooth Regularization}
\begin{table}[t]
	\centering
	\caption{Comparison between TV and Jacobian-based regularization for smoothness on \textit{PaviaU}.}
	\resizebox{1.0\linewidth}{!}{
	\begin{tabular}{ccccccc}
		\toprule
		\multicolumn{2}{c}{Noise case}                      
		&Case 1 &Case 2 &Case 3 &Case 4 &Case 5 \\ 
		\midrule
		\multirow{3}{*}{w/o Smooth Reg} & \multicolumn{1}{c|}{PSNR} 
		&28.28 &29.83 &27.44 &28.77 &28.52 \\
		&\multicolumn{1}{c|}{SSIM} 
		&0.753 &0.813 &0.761 &0.841 &0.818 \\
		&\multicolumn{1}{c|}{NRMSE} 
		&0.188 &0.157 &0.207 &0.179 &0.184 \\ 
		\midrule
		\multirow{3}{*}{TV Reg} & \multicolumn{1}{c|}{PSNR} 
		&29.09 &30.73 &27.72 &28.64 &29.13 \\
		&\multicolumn{1}{c|}{SSIM} 
		&0.797 &0.846 &0.773 &0.853 &0.838 \\
		&\multicolumn{1}{c|}{NRMSE} 
		&0.171 &0.142 &0.201 &0.181 &0.171 \\ 
		\midrule
		\multirow{3}{*}{\begin{tabular}{c}Jacobian-based Reg\\$(\lambda_{J}=0.01, \kappa=1.0)$\end{tabular}} & \multicolumn{1}{c|}{PSNR} 
		&\textbf{30.45} &\textbf{33.69} &\textbf{31.98} &\textbf{29.27} &\textbf{29.15} \\
		&\multicolumn{1}{c|}{SSIM} 
		&\textbf{0.827} &\textbf{0.905} &\textbf{0.877} &\textbf{0.846} &\textbf{0.843} \\
		&\multicolumn{1}{c|}{NRMSE} 
		&\textbf{0.147} &\textbf{0.101} &\textbf{0.123} &\textbf{0.168} &\textbf{0.170} \\
		\bottomrule
	\end{tabular}}
\label{tab:TVvsSmoothReg}
\end{table}
We evaluated the influence of Jacobian-based regularization on the denoising performance of multispectral images using the Pavia University dataset, as summarized in Table~\ref{tab:TVvsSmoothReg}. 
As noted in DeepTensor \cite{saragadam2024deeptensor}, low-dimensional tensor decomposition is an effective approach for tensor principal component analysis, particularly useful in handling gross outliers such as salt-and-pepper noise. Therefore, even without additional smoothing constraints, low-rank tensor functions still exhibit notable denoising capabilities, as demonstrated in the first row of Table~\ref{tab:TVvsSmoothReg}.
Comparison between the second row and the third row show that our Jacobian-based smoothness regularization significantly outperforms traditional TV regularization. 

Classical total variation regularization is grid dependency and widely used for image denoising. However, it cannot constrain the smoothness of non-grid data such as point clouds. In contrast, our proposed explicit Jacobian-based smoothness regularization can be readily extended to various tensor representations, including non-grid data.
In addition to benefiting image denoising tasks, our Jacobian-based smoothness regularization can control the sampling density of point clouds by adjusting its parameters. As shown in Fig.~\ref{fig:Demo4UpsamplingOnShapeNet}, the point clouds generated by our method are denser than those produced by other methods. This capability is not achievable with TV regularization.

We performed sensitivity tests on the hyperparameters $\kappa$ and $\lambda_J$ in Eq.~\eqref{equ:regularizations}, as shown in Fig.~\ref{fig:kappaAblation}. The parameters were adjusted within the ranges $\kappa \in [0.01, 2.0]$ and $\lambda_J \in [0.01, 0.2]$.
Under five different noise cases, our denoising performance remains stable with a variation of less than 5\%. No performance collapse is observed, indicating strong robustness.

\section{Conclusion} 
We generalize CP decomposition to the functional domain to model both on and beyond-grid tensor data. A variational Schatten-$p$ quasi-norm is introduced to induce a sparser CP decomposition by automatically pruning redundant components during deep learning optimization. We prove that the variational Schatten-$p$ quasi-norm is an upper bound for any of its matricizations. We also propose a Jacobian-based smoothness regularization that can be applied to any differentiable tensor function and can serve as a partial substitute for total variation loss.
By leveraging the inherent properties of CP decomposition and advanced regularization techniques within a deep learning framework, our method provides a robust framework for low-CP-rank tensor representation and achieves superior performance in various real-world applications.


\bibliographystyle{IEEEtran}
\bibliography{reference} 

\end{document}